\PassOptionsToPackage{table}{xcolor}
\documentclass{article}
\usepackage{arxiv}
\usepackage[utf8]{inputenc}
\usepackage[T1]{fontenc}

\usepackage{amsmath,amsfonts,bm}

\def\eqref#1{equation~\ref{#1}}

\def\1{\bm{1}}

\DeclareMathAlphabet{\mathsfit}{\encodingdefault}{\sfdefault}{m}{sl}
\SetMathAlphabet{\mathsfit}{bold}{\encodingdefault}{\sfdefault}{bx}{n}

\usepackage{url}
\usepackage{booktabs}
\usepackage{multirow}
\usepackage{amsmath,amsfonts,amssymb,amsthm}
\usepackage{nicefrac}
\usepackage{microtype}
\usepackage{graphicx}
\usepackage{xcolor}
\usepackage{enumitem}
\usepackage[round]{natbib}
\usepackage{algorithm}
\usepackage{algpseudocode}
\usepackage[skip=2pt]{caption}

\usepackage{hyperref}

\graphicspath{{figs/}}

\definecolor{refrow}{gray}{0.925}
\definecolor{trainedclr}{HTML}{2E6FB7}
\definecolor{frozenclr}{HTML}{99330B}
\newcommand{\trainpar}[1]{\textcolor{trainedclr}{#1}}
\newcommand{\freezepar}[1]{\textcolor{frozenclr}{#1}}

\newcommand{\factor}[2]{\underbrace{#1\vphantom{M_k^{\top}}}_{\text{#2}}}
\newtheorem{proposition}{Proposition}

\makeatletter
\g@addto@macro\normalsize{%
  \setlength\abovedisplayskip{4pt plus 2pt minus 2pt}%
  \setlength\belowdisplayskip{4pt plus 2pt minus 2pt}%
  \setlength\abovedisplayshortskip{0pt plus 2pt}%
  \setlength\belowdisplayshortskip{2pt plus 2pt minus 1pt}}
\makeatother

\title{SOLO: Pretraining Billion-Parameter Language Models with Shared-Output Local Learning}
\renewcommand{\shorttitle}{SOLO: Shared-Output Local Learning}

\author{%
  Bojian Yin\thanks{Corresponding author.} \qquad
  Shurong Wang \qquad
  Yuqi Pan \qquad
  Guoqi Li \\
  Institute of Automation, Chinese Academy of Sciences \\
  \texttt{bojian.yin@ia.ac.cn}
}
\date{}

\usepackage{mathtools}
\newlength{\factorwd}
\AtBeginDocument{%
  \settowidth{\factorwd}{\scriptsize residual}%
  \addtolength{\factorwd}{3pt}}
\renewcommand{\factor}[2]{%
  \mathmakebox[\factorwd]{\underbrace{#1}_{\smash[b]{\text{\scriptsize #2}}}}}
\def\eqref#1{(\ref{#1})}   
\newcommand{\term}[2]{\underbrace{#1}_{\smash[b]{\text{\scriptsize #2}}}}

\begin{document}

\maketitle

\begin{abstract}

Large language models are trained with backpropagation, whose global gradient coordinates all layers but forces each to hold its activations and wait for the gradient to pass back through every deeper layer. Conventional local learning removes this update locking by training each module to predict the target through its own readout, but has not scaled to billion-parameter pretraining. We identify these private readouts as a key weakness, since they leave each module without information from deeper modules. We propose \textbf{S}hared-\textbf{O}utput \textbf{LO}cal learning (SOLO), which replaces them with a shared, read-only copy of the final module's readout, the only one trained on the output of the whole network. Taken from the previous step, the copy transmits information from the final module without passing gradients between modules or reintroducing update locking. SOLO approaches backpropagation on Transformers of 340M to 2B parameters pretrained on 15B tokens, staying within one point in average zero-shot accuracy with a perplexity gap that narrows with scale. Readout ablations attribute SOLO's improvement over private readouts to sharing. Without update locking, each of $p$ pipeline stages holds activations for $O(1)$ micro-batches instead of $O(p)$. The freed memory permits larger micro-batches, which reach up to 1.44$\times$ the best measured throughput of pipeline on the same backpropagation partition. To our knowledge, SOLO is the first local learning method to show such memory and throughput gains in billion-parameter language-model pretraining. Local learning thus becomes a practical alternative to backpropagation for large-scale pretraining.


\end{abstract}

\section{Introduction}
\label{sec:intro}

End-to-end backpropagation (BP) is both the engine and the bottleneck of
large language model training. Its global gradient, derived from a single
objective at the final output, coordinates the updates of every layer. However,
the same gradient reaches a layer only after propagating back through all
deeper layers, and until then the layer can neither update nor release its
activations. This dependency, known as update
locking~\citep{jaderberg2017dni,belilovsky2020dgl}, creates large memory
overheads and idle time at scale. Activation checkpointing, optimizer sharding, and parallelism strategies reduce training memory overheads, but do not remove the dependency~\citep{chen2016checkpoint,
rajbhandari2020zero,huang2019gpipe,narayanan2019pipedream}. Local learning instead
removes update locking by splitting the network into modules, giving each
module its own learning signal, and passing no gradient between modules
~\citep{belilovsky2019greedy,belilovsky2020dgl,laskin2020parallel,
lowe2019gim}. Each module can then update and release its activations
without waiting for deeper ones. This independence comes at the expense of the
global gradient, so we ask:

\begingroup\setlength{\fboxsep}{3pt}\vspace{1pt}
\noindent\fbox{\parbox{\dimexpr\linewidth-2\fboxsep-2\fboxrule\relax}{\centering\itshape
Can local learning pretrain billion-parameter language models close to BP, and does it improve training efficiency?}}\vspace{1pt}\endgroup

So far, conventional local learning has succeeded mainly in image classification. There,
layer-wise and block-wise methods approach BP accuracy with lower memory
overheads, but only with modest models and few output classes
\citep{belilovsky2019greedy,nokland2019predsim,wang2021infopro,
belilovsky2020dgl,ma2024auglocal,siddiqui2023blockwise}. Language-model pretraining reverses both conditions. It targets much larger models and requires every local readout to map an intermediate state into a large vocabulary. Existing studies of local learning for language models train small models, use limited token budgets, or apply local objectives only after pretraining
\citep{laskin2020parallel,shing2026diffusionblocks,shi2026lopt,
sushma2026hbll}. To our knowledge, no billion-parameter language model has been pretrained with
local learning.

\begin{figure}[t]
\centering
\includegraphics[width=\linewidth]{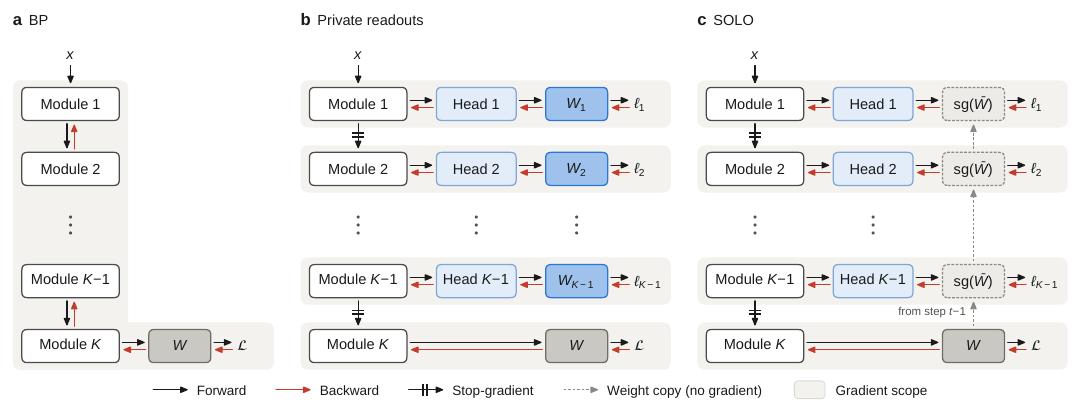}
\caption{Three ways to train a network of $K$ modules. (a) BP trains every
module with the final loss $\mathcal{L}$ through the terminal readout $W$.
(b) Local learning stops the gradient between modules and gives each
auxiliary head a trainable private readout $W_k$ with its own loss $\ell_k$.
(c) SOLO keeps the stop-gradients, but every head predicts through
$\mathrm{sg}(\bar W)$, a read-only copy of $W$ from the previous step. Gradients flow through the copy into its head but do not update it; only $\mathcal{L}$ updates $W$, so no head waits for its current update.}
\label{fig:mechanism}
\vspace{-12pt}
\end{figure}

The main challenge is that removing the global gradient also removes the
coordination it provided. Each module therefore learns from a lightweight
auxiliary head that predicts the network's target and is discarded at
inference \citep{lee2015dsn}. This signal ties the module to the target but carries no
information from deeper modules. Existing methods give each head a private
readout, the final linear map to logits, either learned locally
\citep{belilovsky2019greedy,belilovsky2020dgl,laskin2020parallel,
ma2024auglocal} or fixed at random \citep{mostafa2018deep,yin2025sll}. Local learning, however,
requires isolated gradients, not isolated output readouts. The final module's
\emph{terminal readout} could thus serve every head, giving all modules one
output space of tens of thousands of tokens, far beyond ImageNet's 1,000
classes \citep{deng2009imagenet}. At this size, the placement of tokens in the readout matters, and sharing keeps it consistent across all modules. Language models already share components
across depth, in looped and recurrent Transformers
\citep{dehghani2019universal,geiping2025recurrent,zhu2025ouro} and
early-exit models with a shared readout
\citep{elbayad2020depth,elhoushi2024layerskip}. 
All are trained end to end, so they show the potential of sharing across depth but cannot tell whether it still helps in local learning.

We close this gap by introducing \textbf{SOLO}, \textbf{S}hared-\textbf{O}utput
\textbf{LO}cal learning, which isolates gradients but shares the readout. It
changes a single component of local learning. Every auxiliary head predicts
through a read-only copy of the terminal readout instead of a private one,
which already makes SOLO parameter-efficient (Figure~\ref{fig:mechanism}c).
More importantly, only the final objective updates the terminal readout, so
its previous-step copy transmits information from the final module to every
head. SOLO thus restores a path for information from deeper modules without
reintroducing update locking. We test both halves of this claim, pretraining
Transformers from 340M to 2B parameters on 15B tokens and comparing
pipeline training with BP under the same partition into stages. SOLO
approaches BP at every scale tested. In readout ablations, its intermediate
states become decodable through the terminal readout, and its local
gradients align more closely with BP's than under private readouts. Without
update locking, each of the $p$ pipeline stages holds activations for
$O(1)$ micro-batches instead of $O(p)$. The freed memory permits larger
micro-batches and thus higher pipeline throughput.

We make three contributions. \textbf{Scale.} We demonstrate that SOLO
pretrains language models of 340M to 2B parameters to within one point of
token-matched BP in average zero-shot accuracy. To our knowledge, SOLO is
the first local-learning method to pretrain billion-parameter language
models from scratch. \textbf{Mechanism.} We show that SOLO outperforms
private readouts and identify sharing as the source of the gain. Readout
ablations separate sharing from training, and the gain appears in language
modeling but not in image classification, where even a random readout
costs little. \textbf{Systems.} We show that removing update locking turns
saved activation memory into up to 1.44$\times$ the best measured BP.
Local learning thus pretrains billion-parameter language models close to BP,
and more efficiently under limited memory or bandwidth.
\section{Method}
\label{sec:method}
We factor the gradient that each module receives under BP and under local
learning into a path, a readout, and a residual
(Section~\ref{sec:setup}). Cutting the gradient between modules makes the
path and the residual local but does not require a local readout. SOLO
therefore shares the terminal readout across all modules
(Section~\ref{sec:solo}), which matches the readout factor to BP
(Section~\ref{sec:why}).

\subsection{Local learning}
\label{sec:setup}

Consider a network of $K$ modules, $h_k=f_k(h_{k-1};\theta_k)$ with
$h_0=x$, whose readout $W$ maps the last state to logits $z=Wh_K$. BP trains
all modules on one loss $\mathcal{L}(z,y)$. Local learning instead cuts the
gradient between modules by feeding each module $\mathrm{sg}(h_{k-1})$,
where $\mathrm{sg}$ passes the value but not the gradient. Each module then
learns from its own auxiliary head $\phi_k$ and readout $W_k$, which produce
logits $z_k=W_k\phi_k(h_k)$ and a loss $\ell_k=\mathcal{L}(z_k,y)$. The last
module keeps $W$ and the original loss, and the heads are discarded after
training.

The two rules send different gradients to $h_k$. With the residuals
$\delta=\partial\mathcal{L}/\partial z$ and
$\delta_k=\partial\ell_k/\partial z_k$,
\begin{equation}
\label{eq:fact}
g_k^{\mathrm{BP}}=\nabla_{h_k}\mathcal{L}
=\factor{M_k^{\top}}{path}\factor{W^{\top}}{readout}\factor{\delta}{residual},
\qquad
\hat g_k=\nabla_{h_k}\ell_k
=\factor{J_k^{\top}}{path}\factor{W_k^{\top}}{readout}\factor{\delta_k}{residual},
\end{equation}
where $M_k=\partial h_K/\partial h_k$ and $J_k=\partial\phi_k(h_k)/\partial h_k$.
BP sends every module the same residual through the same readout, and only
the path changes with depth. This gradient is available only after the
forward and backward passes through every deeper module, so module $k$ must
wait before it updates. Existing local learning makes all three factors
local. Avoiding this wait requires it only for the path and the residual,
which BP computes from the deeper modules on the current sample. The
readout is a parameter, not a signal computed on the current sample, so it
need not be local.

\subsection{SOLO: local learning with a shared readout}
\label{sec:solo}

SOLO keeps the local path and residual, but gives every head the terminal
readout $W$. Each head predicts through a read-only copy of it,
\begin{equation}
\label{eq:solo}
z_k=\tau_k\,\mathrm{sg}(\bar W)\,\phi_k(h_k),
\end{equation}
where $\bar W$ is the readout from an earlier step and $\tau_k$ is a learned
scalar temperature that sets the head's logit scale. Only the final loss
updates $W$; each head trains only $\phi_k$ and $\tau_k$. The copy thus
transmits information from the final module to every head without passing a
gradient between modules. The heads also share the map $W$ itself, not only
its range, so every head predicts in one basis (Section~\ref{sec:readout}).
The local gradient of Eq.~\eqref{eq:fact} becomes
\begin{equation}
\label{eq:solo-grad}
\hat g_k=J_k^{\top}\,\tau_k\bar W^{\top}\,\delta_k,
\end{equation}
so each module's residual returns through the same readout as in BP, up to
the scale $\tau_k$ and the delay of $\bar W$. The delay allows modules to run without
waiting, since at step $t$ every head uses $\bar W=W_{t-1}$ while the final
module computes $W_t$. Refreshing the copy only every $S$ steps further reduces
synchronization (Section~\ref{sec:ablations}).

Just as delay avoids waiting, sharing avoids the cost of private readouts.
Each is a $V\times d$ matrix, which is large in language models. With a
32k-token vocabulary and width 2048, it holds 67M parameters, plus their
optimizer states. SOLO replaces all $K-1$ private readouts with read-only
copies that need neither gradients nor optimizer states.
Algorithm~\ref{alg:solo} gives the full training step.

\textbf{Relation to feedback alignment.}
Because each module maps its residual back through $\bar W^{\top}$, SOLO
resembles feedback alignment \citep{lillicrap2016random,nokland2016direct},
sending errors to early layers through an untrained matrix. The
difference lies in the error. Feedback alignment sends the final error of
the current sample, whereas a SOLO module propagates only its own residual
$\delta_k$ through its own head. The copy is still a backward channel, since
$\bar W$ changes with the final module's updates, but it carries parameters
that summarize past batches, not errors on the current sample. Hence no
module receives the derivative of a deeper loss with respect to its own
output, and the modules remain gradient-isolated. For the same reason,
\textsc{rand} (Table~\ref{tab:arms}) controls the prediction map, not a
feedback pathway.

\subsection{How sharing changes the local gradient}
\label{sec:why}

Sharing matches one factor of the local gradient to BP, but not the other
two. For a head that predicts through $\tau_k W_k$, the gap to the BP
gradient splits exactly into a readout, a residual, and a path term,
\begin{equation}
\label{eq:decomp}
\hat g_k-g_k^{\mathrm{BP}}
=\term{J_k^{\top}(\tau_kW_k-W)^{\top}\delta_k}{readout}
+\term{J_k^{\top}W^{\top}(\delta_k-\delta)}{residual}
+\term{(J_k-M_k)^{\top}W^{\top}\delta}{path}.
\end{equation}
SOLO sets $W_k=\bar W$, so the readout term vanishes when $\tau_k=1$ and the
copy is current; the temperature and the delay reintroduce it. The residual
and path terms remain, so sharing does not guarantee a perfect alignment with BP (Appendix~\ref{app:consistency}). The terms are vectors that can partly cancel, so the norm of a term is not
its share of the gap; we use the norms only to compare readout sources. Section~\ref{sec:readout} measures $\cos(\hat g_k,g_k^{\mathrm{BP}})$ on fixed probe batches for each readout
source.

\textbf{Readout sources.}
Table~\ref{tab:arms} changes only the readout of the heads. \textsc{rand}
freezes an independent random readout per head, \textsc{priv} learns one
per head, and SOLO shares the live terminal readout. Two \textsc{pretrained}
variants freeze the terminal readout of a finished SOLO or BP run and
train a fresh model with it. Because these readouts have already seen the
corpus, they are diagnostic probes, not training methods. All variants learn
$\tau_k$, so every head has the same freedom in logit scale.

\begin{table}[t]
\centering
\small
\caption{Readout sources. Each variant changes only the readout of the
auxiliary heads; $\phi_k$ and $\tau_k$ are common to all variants. The last
column counts trainable readout parameters per head.}
\label{tab:arms}
\renewcommand{\arraystretch}{1.2}
\begin{tabular}{@{}llccc@{}}
\toprule
Variant & Readout of head $k$ & Shared & Source & Trainable \\
\midrule
\textsc{rand}
& $W_k^{\mathrm{rand}}$, frozen
& no & random & $0$ \\
\textsc{priv}
& $W_k^{\mathrm{priv}}$, learned
& no & own head & $Vd$ \\
SOLO
& $\bar W$, read-only copy
& yes & live terminal readout & $0$ \\
\textsc{pretrained} (SOLO)
& $W^{\star}_{\mathrm{SOLO}}$, frozen
& yes & fully trained SOLO & $0$ \\
\textsc{pretrained} (BP)
& $W^{\star}_{\mathrm{BP}}$, frozen
& yes & fully trained BP  & $0$ \\
\bottomrule
\end{tabular}
\end{table}

\section{Experiments}
\label{sec:experiments}

We ask whether SOLO pretrains competitive language models at scale, why it
outperforms private readouts, and what removing update locking changes in
pipeline training. Section~\ref{sec:pretraining} compares SOLO with
token-matched BP from 340M to 2B parameters, Section~\ref{sec:readout}
varies only the readout of the auxiliary heads, and
Section~\ref{sec:resources} compares SOLO with pipeline BP under the same
partition. Section~\ref{sec:ablations} varies the refresh period and the
head depth, and Appendix~\ref{app:details} gives the training details.

\subsection{SOLO pretrains models up to 2B near BP quality}
\label{sec:pretraining}

We pretrain Transformers of 340M, 1.3B, and 2B parameters on 15B
SlimPajama tokens, following the training setup of \citet{yang2024gla}, and
evaluate WikiText perplexity and six zero-shot tasks with the
lm-evaluation-harness \citep{gao2023harness}. Both methods train with data
parallelism, and BP is sharded at 2B. Each model has 24 layers, which we
split into two or four modules. Eight modules would leave three layers per
module while adding seven two-block heads, 89\% more compute per token at
340M (Table~\ref{tab:flops}), so we do not use finer splits.


\begin{table}[t]
\caption{Pretraining on 15B SlimPajama tokens. Quality values are
three-seed means; bold marks the best local-learning variant per column.
Throughput and peak memory per GPU are measured on  four A100s at
micro-batch 8 under data parallelism, with BP sharded at 2B. LMB-p and LMB-a
are LAMBADA perplexity and accuracy.}
\label{tab:slim-untie}
\centering
\footnotesize
\setlength{\tabcolsep}{1.6pt}
\begin{tabular}{l l @{\hspace{6pt}} cc @{\hspace{6pt}} ccccccc @{\hspace{6pt}} cc}
\toprule
 & & \multicolumn{2}{c}{\textbf{Perplexity} $\downarrow$}
   & \multicolumn{7}{c}{\textbf{Zero-shot accuracy (\%)} $\uparrow$}
   & \multicolumn{2}{c}{\textbf{Cost}} \\
\cmidrule(lr){3-4} \cmidrule(lr){5-11} \cmidrule(lr){12-13}
\textbf{Scale} & \textbf{Variant} & Wiki. & LMB-p & LMB-a & PIQA & Hella. & Wino. & ARC-e & ARC-c & Avg.
 & tok/s $\uparrow$ & Mem. GB $\downarrow$ \\
\midrule
\rowcolor{refrow}
\multirow{5}{*}{\emph{340M}}
 & BP (DP)                           & 28.04 & 39.5 & 31.8 & 64.2 & 34.6 & 49.9 & 43.4 & 25.0 & 41.5 & 247k & 29.6 \\
 & SOLO, $K{=}2$                     & \textbf{29.05} & \textbf{44.6} & \textbf{29.8} & 64.0 & \textbf{34.0} & 51.9 & 45.6 & 23.6 & \textbf{41.5} & 208k & 15.8 \\
 & \textsc{priv}, $K{=}2$ & 29.47 & 46.3 & 29.2 & \textbf{65.1} & 33.8 & 51.0 & 44.4 & 22.5 & 41.0 & 209k & 15.9 \\
 & SOLO, $K{=}4$                     & 31.19 & 47.9 & 28.6 & 63.3 & 33.1 & 51.3 & \textbf{46.0} & \textbf{23.7} & 41.0 & 156k & 11.2 \\
 & \textsc{priv}, $K{=}4$ & 31.79 & 55.6 & 26.6 & 62.5 & 32.9 & \textbf{53.4} & 44.1 & 23.4 & 40.5 & 157k & 11.5 \\
\midrule
\rowcolor{refrow}
\multirow{4}{*}{\emph{1.3B}}
 & BP (DP)$$              & 21.61 & 20.0 & 38.9 & 68.0 & 41.1 & 53.2 & 48.2 & 25.9 & 45.9 & 76k & 48.9 \\
 & SOLO, $K{=}2$          & \textbf{22.29} & \textbf{22.8} & \textbf{37.8} & \textbf{67.3} & \textbf{39.6} & 52.7 & \textbf{50.1} & \textbf{24.8} & \textbf{45.4} & 77k & 32.7 \\
 & \textsc{priv}, $K{=}2$ & 22.73 & 28.1 & 35.8 & 65.8 & 39.1 & 50.7 & 47.8 & 23.9 & 43.8 & 76k & 33.1 \\
 & SOLO, $K{=}4$          & 23.59 & 23.9 & 37.3 & 66.4 & 38.1 & \textbf{54.1} & 49.5 & 24.4 & 45.0 & 63k & 20.9 \\
\midrule
\rowcolor{refrow}
\multirow{3}{*}{\emph{2B}}
 & BP (FSDP)                         & 20.93 & 22.7 & 39.9 & 67.4 & 41.9 & 51.9 & 50.6 & 25.1 & 46.1 & 62k & 48.0 \\
 & SOLO, $K{=}2$                     & \textbf{21.57} & \textbf{23.0} & \textbf{37.9} & \textbf{67.9} & \textbf{41.6} & \textbf{51.8} & \textbf{49.8} & 25.8 & \textbf{45.8} & 53k & 42.9 \\
 & SOLO, $K{=}4$                     & 22.21 & 23.6 & 37.1 & 66.8 & 40.3 & 51.5 & 49.4 & \textbf{26.5} & 45.3 & 46k & 26.7 \\
\bottomrule
\end{tabular}
\end{table}

\textbf{SOLO stays close to BP.}
With two modules, SOLO approaches BP to within 1.01, 0.68, and 0.64 WikiText perplexity from 340M to 2B, and with four modules to within 3.15, 1.98, and 1.28 (Table~\ref{tab:slim-untie}). Average zero-shot accuracy stays within 0.5 points of BP with two modules and within 0.9 with four. With two
modules, SOLO is within 1.5 points of BP on PIQA, HellaSwag, WinoGrande, and
both ARC sets, and it exceeds BP on ARC-easy at 340M and 1.3B. LAMBADA shows
the largest deficit, 1 to 2 points in accuracy. The auxiliary heads add
11--13\% FLOPs per token with two modules (Table~\ref{tab:flops}).

\textbf{The gap narrows with scale.}
From 340M to 2B, the relative perplexity gap falls from 3.6\% to 3.1\% with
two modules and from 11.2\% to 6.1\% with four. It narrows most on LAMBADA,
where the perplexity gap falls from 5.1 to 2.8 and 0.3 with two modules. The
gap also does not grow with training. SOLO trains as smoothly as BP, without loss spikes, and its gap forms in the first 2B tokens and then stays flat(Figures~\ref{fig:slim-curves} and~\ref{fig:slim-ratio}).

\textbf{Sharing outperforms private readouts at equal cost.}
At the same throughput and memory, sharing lowers WikiText perplexity by
0.42 and 0.44 at 340M and 1.3B with two modules, and by 0.60 at 340M with
four. With two modules, it raises average zero-shot accuracy by 0.5 and 1.6
points. The largest gain is on LAMBADA, where perplexity falls from 28.1 to
22.8 at 1.3B. Section~\ref{sec:readout} traces these gains to sharing.

\textbf{Every module decodes.}
Keeping the heads turns every module into an early exit. In the 1.3B model
with four modules, the exit after the first module already matches the full
340M BP model in WikiText perplexity (28.20 against 28.04;
Table~\ref{tab:exit-vs-lens}). A BP model can be read only at its final
layer. Read through the same readout, its intermediate layers agree with its
final prediction only 16--51\% of the time, against 71--75\% for the SOLO
exits. Each deeper exit refines rather than rewrites the prediction
(Figure~\ref{fig:token-heat}; Appendix~\ref{app:exits}).

\begin{figure}[t]
\centering
\includegraphics[width=0.8\linewidth]{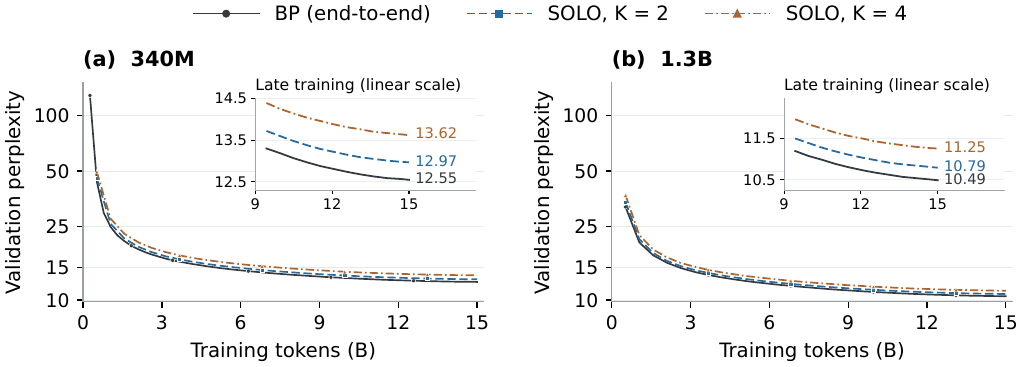}
\caption{Validation perplexity during pretraining at (a) 340M and (b) 1.3B; insets show the
final 9 to 15B tokens on a linear scale.}
\label{fig:slim-curves}
\vspace{-5pt}
\end{figure}

\subsection{SOLO's gain over private readouts comes from sharing}
\label{sec:readout}

\begin{figure}[t]
\centering
\includegraphics[width=0.85\textwidth]{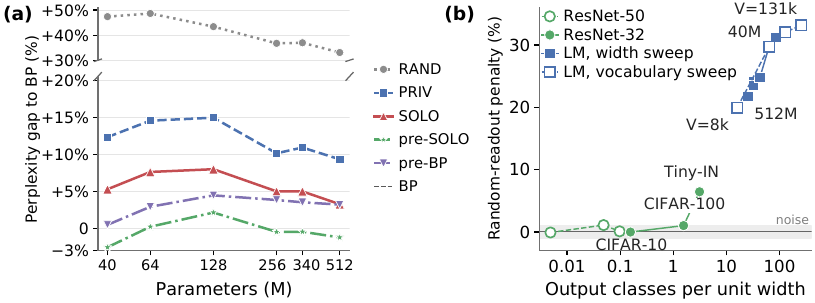}
\caption{Readout source across scale and modality. (a) Perplexity gap to BP
on the one-epoch WikiText-103 sweep with $K=4$, three-seed means.
pre-SOLO and pre-BP are the \textsc{pretrained} variants, which freeze the
terminal readout of a fully trained SOLO or BP model. (b) Cost of a random
readout relative to a private one, as the relative increase in validation
perplexity (language) or test error (vision), against the number of output
classes per unit width. Numbers
are in Appendix~\ref{app:readout-extra}.}
\label{fig:readout-ladder}
\end{figure}

We next identify the source of SOLO's gain over private readouts. The gain
is robust. SOLO leads \textsc{priv} by 2.1-4.7 perplexity from 40M to 512M
parameters on the one-epoch WikiText-103 setting
(Figure~\ref{fig:readout-ladder}a), and by 0.15 and 0.42 with two and four
modules in the Transformer-XL 30-epoch training
\citep{dai2019transformerxl}, with 4.8M rather than 16.7M added parameters
at two modules (Table~\ref{tab:wt103-pubrecipe-sp}). The two readouts
differ along two axes. A private readout is trained by its own head and
used by that head alone, whereas the SOLO readout is trained by the final
module and shared by every head. We vary the two axes independently.

\textbf{The best readout is learned together with local modules.}
In \textsc{pretrained} variants, every head predicts from the first
step through the frozen terminal readout of a fully trained model
(Table~\ref{tab:arms}). First, the trained SOLO readout outperforms
the live copy on the one-epoch sweep with four modules
(Figure~\ref{fig:readout-ladder}a). After 30 epochs with two modules, the
live copy has matured and is best (Table~\ref{tab:wt103-pubrecipe-sp}). In
the same one-epoch runs, the live copy reaches the same gradient alignment
after roughly 2,000 steps (Figure~\ref{fig:align}a). A readout therefore
helps more once the final module has learned it. Second, the fully trained
BP readout, despite coming from the stronger model, trails the fully
trained SOLO readout by 1.0-2.0 perplexity, and only the latter makes the
module outputs directly decodable (Figure~\ref{fig:lens}). A useful readout
must therefore also be learned together with local modules, a condition that the live copy of SOLO meets.

\textbf{Sharing helps through a common basis.}
Table~\ref{tab:readout-arms} compares shared and unshared versions of the trained
terminal readout and of a random matrix on the 40M model. Rotating the
trained readout by a different orthogonal matrix for each module
(\textsc{rot}) keeps its content and the logits each head can reach, but
raises perplexity by up to 13.3 over SOLO. Sharing one random matrix across
all heads (\textsc{rshare}), instead of drawing one per head, lowers
perplexity by up to 12.0. A common basis thus helps even without trained
content. Through the skip connections each module inherits the basis
of its input. A shared readout decodes this basis directly, whereas a
rotated one forces each later module to translate it, and their local
gradients lose alignment with BP (Appendix~\ref{app:readout-extra}). Deeper heads
do not close the gap between SOLO and PRIV, so head capacity cannot
substitute for a shared readout.

\begin{table}[t]
\caption{Readout sources under a published WikiText-103 recipe, mean over three
seeds. Perplexity per subword token, word-level in parentheses; $\Delta$par
denotes trainable parameters beyond BP's 62.3M, and $^{\S}$ an additional
11.9M-parameter frozen pretrained readout.}
\label{tab:wt103-pubrecipe-sp}
\centering
\footnotesize
\setlength{\tabcolsep}{3pt}
\begin{tabular}{c l @{\hspace{6pt}} cc @{\hspace{6pt}} ccc}
\toprule
\multirow{2}{*}{$K$} & \multirow{2}{*}{\textbf{Variant}}
 & \multicolumn{2}{c}{\textbf{Perplexity}}
 & \multicolumn{3}{c}{\textbf{Cost}} \\
\cmidrule(lr){3-4}
\cmidrule(lr){5-7}
 & & Val $\downarrow$ & Test $\downarrow$
 & tok/s $\uparrow$ & Peak mem (GB) $\downarrow$ & $\Delta$par (M) \\
\midrule
\rowcolor{refrow}
1 & BP (end-to-end)
  & 23.15 (38.30) & 23.59 (41.59) & 88k & 16.6 & --- \\
\midrule
\multirow{4}{*}{2}
 & SOLO
  & \textbf{23.49} (38.95) & \textbf{24.02} (42.48)
  & 53k & 12.9 & $+4.8$ \\
 & \textsc{priv}
  & 23.64 (39.25) & 24.20 (42.87)
  & 52k & 13.0 & $+16.7$ \\
 & \textsc{pretrained} (SOLO)
  & 23.59 (39.10) & 24.06 (42.56)
  & 52k & 13.0 & $+4.8^{\S}$ \\
 & \textsc{pretrained} (BP)
  & 23.82 (39.59) & 24.30 (43.07)
  & 52k & 12.9 & $+4.8^{\S}$ \\
\midrule
\multirow{2}{*}{4}
 & SOLO
  & \textbf{24.35} (40.62) & \textbf{24.82} (44.17)
  & 29k & 10.6 & $+14.4$ \\
 & \textsc{priv}
  & 24.77 (41.42) & 25.24 (45.05)
  & 29k & 11.1 & $+50.2$ \\
\bottomrule
\end{tabular}
\end{table}

\textbf{Sharing brings local gradients closer to BP.}
As predicted in Section~\ref{sec:why}, sharing eliminates the readout term
of the gap to the BP gradient while leaving the residual and path terms
(Figure~\ref{fig:align}c). Local gradients also align better with BP. Their
cosine with the BP gradient rises from 0.32 under \textsc{rand} to 0.52
under \textsc{priv} and 0.70 under SOLO, tracking perplexity, and the order
holds at every module (Figure~\ref{fig:align}a,b). Alignment alone does not
explain perplexity, since SOLO and \textsc{pretrained} (SOLO) reach similar
alignment yet differ in perplexity (Figure~\ref{fig:align}d).

\begin{figure}[t]
\centering
\includegraphics[width=\textwidth]{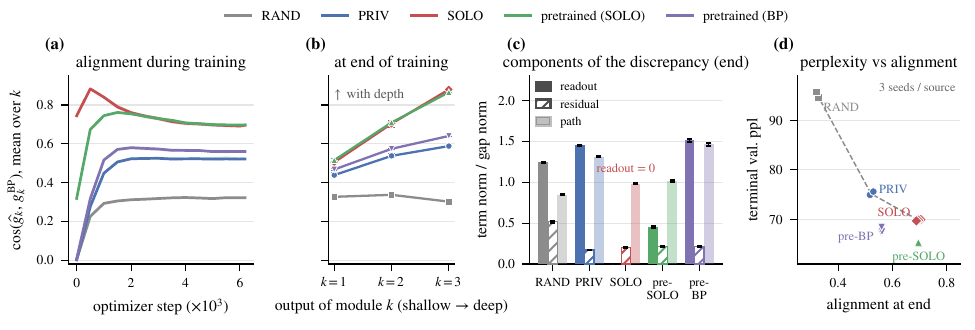}
\caption{Gradient alignment on the 40M WikiText-103 sweep, three seeds per
variant. (a) Alignment during training; (b) alignment at the end, by
module; (c) norms of the three terms of Eq.~\eqref{eq:decomp} at the end,
divided by the norm of the gap; (d) terminal perplexity against alignment.
The terms can partly cancel, so their norms are not shares of the gap.
Setting aside $\tau_k$, the readout term is zero under SOLO by
construction.}
\label{fig:align}
\end{figure}

\textbf{Sharing helps in language, where the vocabulary is large, but not in vision.}
From CIFAR to ImageNet-1k, sharing yields no consistent gain over
\textsc{priv} (Appendix~\ref{app:readout-extra}). The readout itself
matters far less there. Relative to a private readout, a random one costs
almost nothing when a classifier has fewer classes than feature dimensions,
but up to 33\% in language, where the vocabulary is 26--85 times the model width
(Figure~\ref{fig:readout-ladder}b). Sharing pays off only where the readout
matters.

\subsection{SOLO turns saved activation memory into pipeline throughput}
\label{sec:resources}

Update locking matters most in pipeline parallelism, where each GPU holds
one stage, a group of consecutive layers. We therefore compare SOLO with
pipeline BP under the same partition into $p{=}K$ stages, one module each,
which isolates the effect of removing update locking. We train a 96-layer,
1.2B-parameter model on eight A100 GPUs with $M{=}24$ or $72$
micro-batches per step. The BP baselines are the one-forward-one-backward
schedule (1F1B), which alternates forward and backward passes, and its
interleaved variant with $v$ virtual stages per GPU
(VPP; \citealp{narayanan2021efficient}) (Figure~\ref{fig:same-split};
Appendix~\ref{app:pipeline-bp}).

\textbf{Each stage holds activations for $O(1)$ micro-batches instead of $O(p)$.}
Under 1F1B, each stage keeps the activations of a micro-batch until its
gradient returns from the later stages. The first stage therefore holds $p$
micro-batches at once (Figure~\ref{fig:same-split}a), and its activation
memory equals that of the unsplit model, however many stages are added. A
SOLO stage runs its local backward pass right after its forward pass and
releases the activations. It holds a single micro-batch
(Figure~\ref{fig:same-split}b), so its activation memory falls as $1/p$. At
$p{=}8$, peak activation memory drops from 17.4 to 2.5\,GB, and for
$p = 2, 4, 6, 8$ the reduction is 1.9, 3.7, 5.4, and 7.0-fold, close to $p$
(Figure~\ref{fig:same-split}d; Table~\ref{tab:pipe-mem}). Pipeline BP
reaches this level only by recomputing every block (2.1\,GB), at three quarters of its throughput,
whereas VPP raises activation memory to 24.8\,GB to
shrink the bubble (Appendix~\ref{app:pipeline-bp}).

\textbf{Without update locking, the pipeline has no backward bubble.}
In 1F1B, a stage cannot start a backward pass until the next stage returns
a gradient, so the pipeline sits idle for a fraction $(p-1)/(M+p-1)$ of
each step, known as the bubble. SOLO stages never wait for a gradient, so
this bubble disappears. In exchange, each stage runs its auxiliary head,
whose share of the work shrinks as the stage holds more layers. With 12
layers per stage at $p{=}8$, the heads add about 1.4\% of a step per
additional module, and SOLO is faster whenever $M$ is below about 70
(Appendix~\ref{app:pipeline-bp}). With a 50-step refresh, it reaches
$1.18\times$ the throughput of 1F1B at $M{=}24$, matches it at $M{=}72$, and
beats the faster VPP schedule by 10\% at $M{=}24$
(Figure~\ref{fig:same-split}c). Per-step refresh synchronizes the stages
and costs up to 7\%, which a 10-step refresh recovers. On the 24-layer
pretraining models, with 3 to 12 layers per stage, SOLO reaches
0.72--0.91$\times$ 1F1B at a fixed micro-batch
(Figure~\ref{fig:pipe-three}b).

\begin{figure}[t]
\centering
\includegraphics[width=\textwidth]{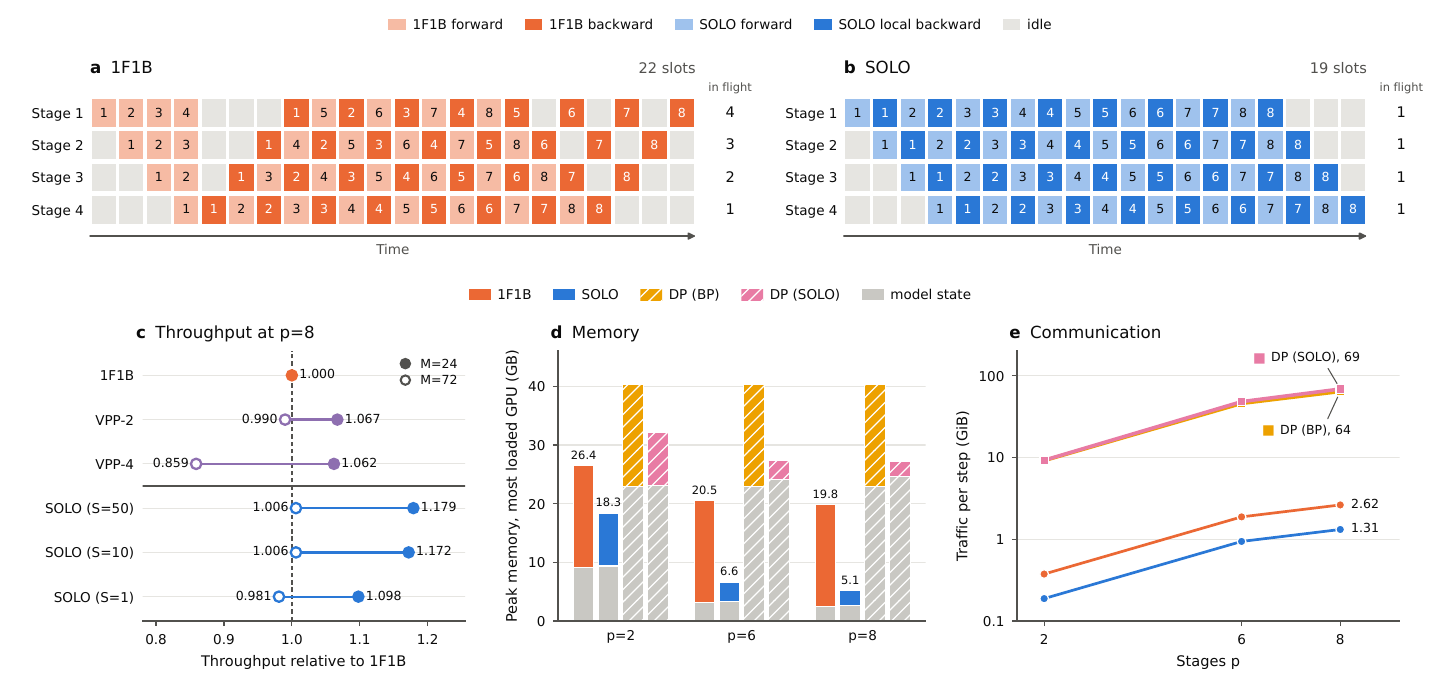}
\caption{SOLO against pipeline BP under the same partition. (a, b)
Schedules of 1F1B and SOLO with $p{=}4$ stages and eight micro-batches,
ignoring the auxiliary heads. Each cell is one forward or backward pass of
one micro-batch; 1F1B takes 22 slots and SOLO 19, and the right column
gives the peak number of micro-batches in flight per stage. Only one step
is drawn; with $S{>}1$, SOLO's idle slots at the start and end overlap with
the neighboring steps. (c) Throughput relative to 1F1B at $p{=}8$; VPP-$v$
is interleaved 1F1B with $v$ virtual stages per GPU, and $S$ is the refresh
period of the copy in steps. (d) Peak memory on the most loaded GPU, with
model state in gray and activations in color. (e) Communication volume per
step. Panels (d) and (e) use $M{=}24$; data-parallel runs are shown for
reference.}
\label{fig:same-split}
\end{figure}

\textbf{The freed memory becomes throughput.}
At a fixed global batch, a larger micro-batch uses the GPU more efficiently
but leaves fewer micro-batches per step. 1F1B cannot exploit this trade,
because a larger micro-batch both raises its activation memory and widens
its bubble, so its throughput peaks at micro-batch 4. SOLO has no bubble
and a small activation footprint, so its throughput rises until the GPU
kernels saturate, reaching $1.44\times$ the best 1F1B throughput at
micro-batch 16 and $1.43\times$ at equal memory (Table~\ref{tab:mbsweep};
Figure~\ref{fig:mbsweep}).

\textbf{SOLO tolerates slow links between stages.}
SOLO propagates activations forward and no gradients backward, which halves the
traffic between stages (Figure~\ref{fig:same-split}e). Since no stage waits
for a gradient, a slow link also cannot stall the backward pass. On a
24-layer model split into two stages, throttling the link to 1\,Gb/s
reduces the throughput of SOLO by 1.2\% and that of 1F1B by 51\%
(Table~\ref{tab:link}).

Local learning thus relaxes the memory, micro-batch, and bandwidth limits
of pipelines, and schedules designed for it may widen these gains across
nodes.

\subsection{Ablations}
\label{sec:ablations}

\textbf{Refresh period.}
The pretraining runs of Table~\ref{tab:slim-untie} read the copy at every
step ($S{=}1$), but the copy can also be refreshed less often. On WikiText-103
with four modules, periods up to $S{=}50$ change validation perplexity by
at most 0.4\%, while $S{=}100$ and $S{=}200$ raise it by 1.9\% and 3.0\%
(Appendix~\ref{app:sync}). A period of 10 to 50 steps therefore keeps
quality and recovers the 7\% throughput that per-step refresh costs in
pipelines (Section~\ref{sec:resources}).

\textbf{Auxiliary-head depth.}
Deeper heads and sharing help independently. On WikiText-103 with four
modules, from 40M to 256M parameters, deepening SOLO's heads from a linear
readout ($H{=}0$) to three blocks shrinks its perplexity gap to BP from
8.6--13.5\% to 2.7-5.2\% (Table~\ref{tab:hdgrid}). \textsc{priv} improves
along the same curve but stays 5.4-7.8\% behind SOLO in all sixteen cells,
so even at its best depth it trails SOLO with one-block heads by 2.3--2.4\%
(Figure~\ref{fig:hdgrid}). Beyond $H{=}2$, each extra head block adds the
computation of $K{-}1$ layers and narrows the gap by only about one point
(Figure~\ref{fig:pipe-three}c), so we use $H{=}2$.

\section{Related work}
\label{sec:related}

\textbf{Local learning.}
Local learning approaches BP in image classification
\citep{belilovsky2019greedy,nokland2019predsim,belilovsky2020dgl,
siddiqui2023blockwise,ma2024auglocal,yin2025sll}, but language-model
studies stop at a 6M-parameter Transformer on LM1B \citep{laskin2020parallel}, 12-layer Llama-2-style models \citep{shing2026diffusionblocks}, or 774M parameters over repeated
passes of WikiText-103 \citep{sushma2026hbll}, or apply local objectives
only after pretraining \citep{shi2026lopt}. These methods refine the local
objectives and heads \citep{wang2021infopro,pyeon2021sedona} but maintain a
private readout per module, learned or fixed at random
\citep{mostafa2018deep}. SOLO instead shares the terminal readout across
modules.

\textbf{Other ways to avoid update locking.}
Delayed and synthetic gradients, forward gradients, and LocoProp pass a
downstream derivative, an estimate of it, or a target derived from it
\citep{huo2018ddg,jaderberg2017dni,baydin2022gradients,qin2026splitfg,
amid2022locoprop}, and feedback alignment sends the final error, or the
label, through a fixed or learned matrix \citep{lillicrap2016random,
akrout2019wm,launay2020dfa}. Forward-only rules avoid derivatives but have
not reached language-model pretraining \citep{hinton2022forwardforward,
dellaferrera2022error}. SOLO passes no error or derivative between modules,
only a copy of a parameter (Section~\ref{sec:solo}).

\textbf{Sharing a readout across depth.}
Early-exit language models reuse or align one readout across depth
\citep{elbayad2020depth,schuster2022calm,elhoushi2024layerskip}, but train
end to end, so every exit loss updates the readout and the layers below.
Lenses decode intermediate layers after training
\citep{nostalgebraist2020logitlens,belrose2023tuned}, and aligned training
makes one classifier serve several layers \citep{jiang2024aligned}. SOLO
shares the readout during training and updates it only with the final
objective. Fixed random classifiers suffice for image classification
\citep{hoffer2018fix}, but not for language in our experiments.

\textbf{Pipeline parallelism.}
Schedules such as GPipe, 1F1B, and its interleaved variant reduce idle time
between stages \citep{huang2019gpipe,narayanan2019pipedream,
narayanan2021efficient}, and zero-bubble schedules fill the rest by
splitting the backward pass \citep{qi2024zerobubble}. All keep update
locking, so each stage must hold its activations, or recompute them
\citep{chen2016checkpoint}, until its gradient returns. Local-learning and
interlocking pipelines have mainly targeted image classifiers
\citep{gomez2022interlocking,guo2024ppll}. DiLoCo reduces the synchronization
across replicas \citep{douillard2023diloco}, while SOLO eliminates the wait
for gradients across depth.

\section{Discussion and Conclusion}
\label{sec:discussion}

Local learning usually isolates both the gradients and the readouts of its
modules. Our results show that only the gradients need isolation. A shared
readout gives all modules one basis, restores information flow from the
final module, and brings local gradients closer to BP. It helps most when
learned with local modules and when the vocabulary is large, as in
language. Sharing across depth, used by looped and early-exit models under
end-to-end training, thus also works without gradients between modules.

\textbf{Limitations.}
We test two and four modules, since eight would leave only three layers per
module in our 24-layer models. Finer partitions add more heads, and their
trade-off between quality and efficiency remains open. Our comparisons
match training tokens, while the heads add 11--13\% FLOPs per token with
two modules. The systems study runs on one node, and on the 24-layer
pretraining models SOLO trails 1F1B at a fixed micro-batch. Finally, we
study plain Transformers without mixture-of-experts layers, grouped-query
attention, long contexts, or post-training, and we give no convergence
guarantee.

\textbf{Future work.}
Cheaper heads would make finer partitions affordable, and refreshing the
copy asynchronously would remove the remaining synchronization between
stages and let pipelines span nodes, where SOLO's tolerance of slow links
matters most. The pretrained readouts show that a mature copy helps, and
schedules such as an exponential moving average of the readout may bring
this benefit into a single run. Extending SOLO to modern architectures and
post-training would test its scope. Isolation across depth may also
complement low-communication training across replicas
\citep{douillard2023diloco}, which reduces synchronization along the other
axis.

\textbf{Conclusion.}
To our knowledge, SOLO is the first local-learning method to pretrain
billion-parameter language models from scratch. Sharing a read-only copy of
the terminal readout gives every module shared basis while keeping
gradients isolated, and brings local learning close to BP
from 340M to 2B parameters. Without update locking, each pipeline stage
holds activations for $O(1)$ micro-batches, and the freed memory becomes
throughput. Local learning thus becomes a practical option for pretraining when memory
or interconnect bandwidth is limited.

\section*{Acknowledgments}
We thank Sander Bohte, Lorenzo Pes, and Chenxi Dou for helpful discussions.

\bibliography{referencesv1}
\bibliographystyle{plainnat}

\appendix

\section{The SOLO step}
\label{app:algorithm}

Algorithm~\ref{alg:solo} states one optimizer step, where Step applies one
optimizer update to the parameters it lists. A module updates as soon as
its own forward and local backward passes have run, before the next module
starts, so no module waits for a deeper one. In one process, the modules
run in sequence and $\bar W$ is a view of $W$ with the gradient stopped,
which gives the previous-step copy of Section~\ref{sec:solo} at no cost.
In the pipelines of Section~\ref{sec:resources}, each module runs on its
own GPU, activations stream forward, and line~1 becomes a one-way broadcast
of $W$ from the last GPU every $S$ steps. No gradient is passed between
modules.

\begingroup
\makeatletter
\setlength{\columnwidth}{0.72\textwidth}%
\renewcommand\float@makebox[1]{%
  \vbox{\hsize\textwidth
    \moveright\dimexpr(\textwidth-#1)/2\relax
    \vbox{\hsize=#1 \@parboxrestore
      \@fs@pre\@fs@iftopcapt
        \ifvoid\@floatcapt\else\unvbox\@floatcapt\par\@fs@mid\fi
        \unvbox\@currbox
      \else\unvbox\@currbox
        \ifvoid\@floatcapt\else\par\@fs@mid\unvbox\@floatcapt\fi
      \fi\par\@fs@post\vskip\z@}}}%
\makeatother
\begin{algorithm}[!htb]
\caption{SOLO, one optimizer step with $K$ gradient-isolated modules.
\trainpar{Blue} marks a parameter this step updates, \freezepar{orange} the read-only copy
$\freezepar{\bar W}$ that no local loss updates.}
\label{alg:solo}
\renewcommand{\algorithmicrequire}{\textbf{Input:}}
\begin{algorithmic}[1]

\Require modules $f_1,\dots,f_K$ with parameters $\trainpar{\theta_k}$; auxiliary heads $\trainpar{(\phi_k,\tau_k,b_k)}$ for $k<K$; terminal readout $\trainpar{(W,b)}$; minibatch $(x,y)$
\State $\freezepar{\bar W} \gets \mathrm{sg}(\trainpar{W})$ for all $k<K$ \Comment{read-only copy}
\State $h_0 \gets x$
\For{$k = 1,\dots,K-1$} \Comment{module $k$ starts}
  \State $h_k \gets f_k(\mathrm{sg}(h_{k-1});\,\trainpar{\theta_k})$
  \State $\ell_k \gets \mathrm{CE}\big(\mathrm{softmax}(\trainpar{\tau_k} \freezepar{W_k}\,\trainpar{\phi_k}(h_k)+\trainpar{b_k}),\,y\big)$
  \State $\trainpar{(\theta_k,\phi_k,\tau_k,b_k)} \gets \mathrm{Step}(\nabla\ell_k)$ \Comment{module $k$ done}
\EndFor
\State $h_K \gets f_K(\mathrm{sg}(h_{K-1});\,\trainpar{\theta_K})$
\State $\mathcal{L} \gets \mathrm{CE}\big(\mathrm{softmax}(\trainpar{W} h_K + \trainpar{b}),\,y\big)$
\State $\trainpar{(\theta_K, W, b)} \gets \mathrm{Step}(\nabla\mathcal{L})$ \Comment{update of $W$}


\end{algorithmic}
\end{algorithm}
\endgroup

\section{The local gradient when readout and path match}
\label{app:consistency}

Equation~\eqref{eq:decomp} splits the gap between the local and the BP
gradient into a readout, a residual, and a path term. This appendix shows
what remains when the readout and path terms vanish. For a head that
predicts through $\tau_kW_k$, Eq.~\eqref{eq:fact} gives
$\hat g_k=J_k^{\top}(\tau_kW_k)^{\top}\delta_k$.

\begin{proposition}
\label{prop:descent}
Let $\tau_kW_k=W$ and $J_k=M_k$. Then
$\langle\hat g_k,g_k^{\mathrm{BP}}\rangle=\delta_k^{\top}G_k\,\delta$ with
$G_k=WM_kM_k^{\top}W^{\top}\succeq0$. If moreover $\delta_k=\delta$, then
$\hat g_k=g_k^{\mathrm{BP}}$.
\end{proposition}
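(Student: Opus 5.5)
The argument is a direct substitution into the factorization of Eq.~\eqref{eq:fact}, followed by a transpose. From Eq.~\eqref{eq:fact} we have $g_k^{\mathrm{BP}}=M_k^{\top}W^{\top}\delta$ and, for a head predicting through $\tau_kW_k$, $\hat g_k=J_k^{\top}(\tau_kW_k)^{\top}\delta_k$.

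\textbf{Step 1: substitute the hypotheses.} Setting $\tau_kW_k=W$ and $J_k=M_k$ gives
\begin{equation*}
\hat g_k=M_k^{\top}W^{\top}\delta_k .
\end{equation*}
Both gradients now share the linear map $A_k:=M_k^{\top}W^{\top}$ and differ only in the residual they act on.

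\textbf{Step 2: compute the inner product.} We have
\begin{equation*}
\langle\hat g_k,g_k^{\mathrm{BP}}\rangle=(A_k\delta_k)^{\top}(A_k\delta)=\delta_k^{\top}A_k^{\top}A_k\,\delta=\delta_k^{\top}WM_kM_k^{\top}W^{\top}\delta ,
\end{equation*}
which identifies $G_k=A_k^{\top}A_k=WM_kM_k^{\top}W^{\top}$.

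\textbf{Step 3: show $G_k$ is positive semidefinite.} $G_k$ is symmetric because it has the Gram form $A_k^{\top}A_k$. For any $v$,
\begin{equation*}
v^{\top}G_kv=\|M_k^{\top}W^{\top}v\|^2\ge 0 ,
\end{equation*}
so $G_k\succeq0$.

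\textbf{Step 4: equal residuals.} If moreover $\delta_k=\delta$, then $\hat g_k=A_k\delta=g_k^{\mathrm{BP}}$. Equivalently, all three terms of Eq.~\eqref{eq:decomp} vanish.

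There is no real obstacle. The only point needing care is the dimensions: $\delta,\delta_k\in\mathbb{R}^V$, $W\in\mathbb{R}^{V\times d}$, and $M_k,J_k\in\mathbb{R}^{d\times d}$, so the transposes compose correctly. Two remarks are worth adding. First, $G_k\succeq0$ does not make the inner product nonnegative for arbitrary $\delta_k$. It does so when the two residuals are positively correlated in the $G_k$-geometry, for example when $\delta_k=\delta$, which gives $\|g_k^{\mathrm{BP}}\|^2\ge0$ and hence descent. Second, the hypothesis $J_k=M_k$ is a formal idealization, because $\phi_k$ and the deeper modules differ in general.
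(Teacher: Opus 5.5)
Your proposal is correct and follows the paper's proof. You substitute $\tau_kW_k=W$ and $J_k=M_k$ into the factorization of Eq.~\eqref{eq:fact}, read off $G_k=WM_kM_k^{\top}W^{\top}$ from the inner product, and note that $\delta_k=\delta$ makes the two gradients coincide (equivalently, all three terms of Eq.~\eqref{eq:decomp} vanish); your explicit check $v^{\top}G_kv=\|M_k^{\top}W^{\top}v\|^2\ge0$ simply spells out the semidefiniteness that the paper leaves implicit.
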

\begin{proof}
By Eq.~\eqref{eq:fact},
$\langle\hat g_k,g_k^{\mathrm{BP}}\rangle
=\delta_k^{\top}(\tau_kW_k)J_kM_k^{\top}W^{\top}\delta$.
Substituting $\tau_kW_k=W$ and $J_k=M_k$ gives the first claim. With
$\delta_k=\delta$ as well, all three terms of Eq.~\eqref{eq:decomp} vanish,
which gives the second.
\end{proof}

The inner product can still be negative, because a positive semidefinite
form can be negative off its diagonal. Writing
$\|v\|_{G_k}=\|M_k^{\top}W^{\top}v\|$, so that
$\|\delta\|_{G_k}=\|g_k^{\mathrm{BP}}\|$, and expanding
$\delta_k=\delta+(\delta_k-\delta)$ gives, by the Cauchy--Schwarz inequality
for $G_k$,
\begin{equation}
\label{eq:descent-bound}
\langle\hat g_k,g_k^{\mathrm{BP}}\rangle
=\|g_k^{\mathrm{BP}}\|^{2}+(\delta_k-\delta)^{\top}G_k\,\delta
\;\ge\;\|\delta\|_{G_k}\bigl(\|\delta\|_{G_k}-\|\delta_k-\delta\|_{G_k}\bigr).
\end{equation}
The local gradient thus has a positive inner product with the BP gradient
whenever $\|\delta_k-\delta\|_{G_k}<\|\delta\|_{G_k}$. Under cross-entropy,
$\delta_k-\delta=p_k-p$, the difference between the distributions predicted
by the head and by the final module, so the condition asks the head to
predict close to the final module.

SOLO meets the readout condition up to $\tau_k$ and the lag of $\bar W$
(Section~\ref{sec:why}). The path condition does not hold in general, since
the head is much shallower than the modules after module $k$. Alignment at
$h_k$ also does not imply alignment in the parameters $\theta_k$ unless the
two gradients are equal. Section~\ref{sec:readout} therefore measures the
alignment directly (Figure~\ref{fig:align}).  

\section{Experimental details}
\label{app:details}

\paragraph{Language.}
The pretraining runs follow the Transformer++ recipe of \citet{yang2024gla},
a single pass over 15B SlimPajama tokens \citep{soboleva2023slimpajama} at
340M, 1.3B, and 2B parameters. Neither method ties the input embedding to
the readout \citep{press2017tying}. Each auxiliary head stacks $H$
Transformer++ blocks and a final RMSNorm, with $H{=}2$ unless stated
otherwise. Both methods train with data parallelism on four A100s. BP is
replicated (DDP) at 340M and 1.3B and sharded (FSDP) at 2B, where
replicated training exceeds an 80\,GB device. SOLO is replicated at every
scale. Each GPU holds all modules and heads, runs the modules in sequence,
and releases the activations of each module after its local backward pass.
The heads read $\bar W$ from the previous step ($S{=}1$).
Table~\ref{tab:slim-untie} thus compares the two methods under data
parallelism, and Section~\ref{sec:resources} compares them under pipeline
parallelism with the same partition. We evaluate with the
lm-evaluation-harness \citep{gao2023harness} on WikiText perplexity, the
word-level perplexity on the WikiText-2 test set \citep{merity2017pointer},
and six zero-shot tasks. Accuracy is length-normalized for HellaSwag and
ARC-challenge and raw for the other tasks. The readout sweep trains for one
epoch on WikiText-103 at six sizes from 40M to 512M parameters, each split
into $K{=}4$ modules with two-block heads, and refreshes the shared copy
every 100 steps. The head-depth grid of Table~\ref{tab:hdgrid} refreshes
it every step.

\paragraph{Gap along training.}
Figure~\ref{fig:slim-ratio} divides SOLO's validation perplexity by BP's at the same token count
along the runs of Figure~\ref{fig:slim-curves}. The gap is largest in the first billion tokens
and settles by 2B; from there to 15B it stays within half a point at $K{=}2$ (2.9 to 3.3\% at 340M, 2.4 to 2.9\% at 1.3B)
and within 1.2 points at $K{=}4$ (7.9 to 9.1\% and 6.5 to 7.2\%).

\begin{figure}[!htb]
\centering
\includegraphics[width=0.9\textwidth]{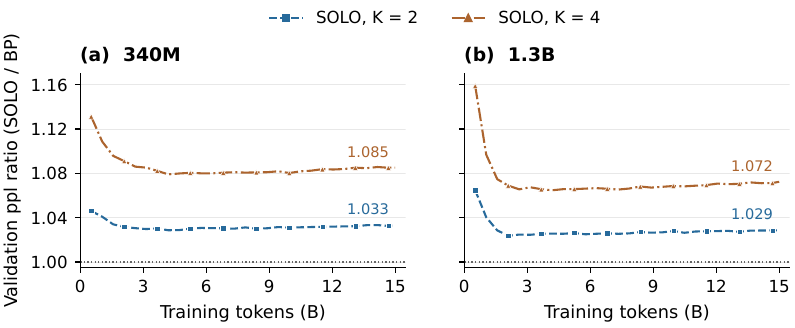}
\caption{Relative perplexity gap during pretraining, SOLO validation perplexity divided by BP's at
the same token count, for (a) 340M and (b) 1.3B, from the single-seed training-log curves of
Figure~\ref{fig:slim-curves}; ratios at matching evaluation steps, no interpolation or smoothing;
the dotted line is parity.}
\label{fig:slim-ratio}
\end{figure}

\paragraph{Vision.}
The vision sweep trains ResNet-32 on CIFAR-10, CIFAR-100, and Tiny-ImageNet, split into
fifteen modules, each non-terminal module with a four-layer
convolutional auxiliary head with a 64-dimensional readout; single-seed ResNet-50 runs repeat
the three datasets at a readout width of 2048. Per-dataset schedules differ, so comparisons
across datasets are qualitative.

\section{Readout sweeps}
\label{app:readout-extra}


Tables~\ref{tab:lm-ladder-scale} and~\ref{tab:vision-ladder} give the numbers behind
Figure~\ref{fig:readout-ladder}; Table~\ref{tab:imagenet} extends the vision comparison to
ImageNet-1k.

\begin{table}[!htb]

  \caption{Language readout sweep across scale, WikiText-103 validation
  perplexity (word-level, $V{=}32{,}768$, one epoch, $K{=}4$), three-seed
  means. The sub-row gives $V/d$, which falls from 85 to 26 from left to
  right. The shaded BP row is the end-to-end reference, and bold marks the
  best local-learning variant per column. Rows run from worst to best, an
  order that holds at every size. The two \textsc{pretrained} variants
  freeze the terminal readout of a fully trained SOLO or BP model (pre-SOLO
  and pre-BP in Figure~\ref{fig:readout-ladder}); they are diagnostic
  probes, not training methods.}
  
\label{tab:lm-ladder-scale}
\centering
\footnotesize
\setlength{\tabcolsep}{8pt}
\begin{tabular}{l cccccc}
\toprule
 & \multicolumn{6}{c}{\textbf{Validation perplexity} $\downarrow$} \\
\cmidrule(lr){2-7}
\textbf{Variant} & \textbf{40M} & \textbf{64M} & \textbf{128M} & \textbf{256M} & \textbf{340M} & \textbf{512M} \\
 & $\scriptstyle V/d{=}85$ & $\scriptstyle 64$ & $\scriptstyle 43$ & $\scriptstyle 32$ & $\scriptstyle 32$ & $\scriptstyle 26$ \\
\midrule
\rowcolor{refrow}
BP (end-to-end)  & 67.08 & 54.91 & 44.46 & 40.47 & 39.51 & 36.81 \\
\textsc{rand}    & 98.88 & 81.63 & 63.76 & 55.34 & 54.12 & 48.99 \\
\textsc{priv}    & 75.34 & 62.89 & 51.10 & 44.57 & 43.82 & 40.24 \\
SOLO             & 70.62 & 59.09 & 48.01 & 42.49 & 41.48 & 38.01 \\
\textsc{Pretrained}, BP  & 67.40 & 56.51 & 46.44 & 42.02 & 40.90 & 37.99 \\
\textsc{Pretrained}, SOLO & \textbf{65.37} & \textbf{55.03} & \textbf{45.41} & \textbf{40.28} & \textbf{39.33} & \textbf{36.36} \\
\bottomrule
\end{tabular}
\end{table}

\begin{table}[!htb]
\caption{Vision readout sweep, terminal test accuracy (\%), backbones split into 15
gradient-isolated modules. ResNet-32 columns are three-seed means; ResNet-50 columns are
single-seed scouting runs ($^{\dagger}$). Datasets are ordered by classes per unit width, C/d within each
backbone. Bold: best per column.
$^{\ddagger}$Trainable parameters over BP on ResNet-50 Tiny-ImageNet, millions.}
\label{tab:vision-ladder}
\centering
\footnotesize
\setlength{\tabcolsep}{4pt}
\begin{tabular}{l ccc ccc c}
\toprule
 & \multicolumn{3}{c}{\textbf{ResNet-32} ($d{=}64$)} & \multicolumn{3}{c}{\textbf{ResNet-50} ($d{=}2048$)$^{\dagger}$} & \\
\cmidrule(lr){2-4}\cmidrule(lr){5-7}
\textbf{Variant} & \textbf{C-10} & \textbf{C-100} & \textbf{Tiny-IN} & \textbf{C-10} & \textbf{C-100} & \textbf{Tiny-IN} & \textbf{$\Delta$par}$^{\ddagger}$ \\
 & $\scriptstyle C/d{=}0.16$ & $\scriptstyle 1.56$ & $\scriptstyle 3.1$ & $\scriptstyle 0.005$ & $\scriptstyle 0.049$ & $\scriptstyle 0.098$ & $\scriptstyle \mathrm{R50,\,M}$ \\
\midrule
\textsc{rand}    & 92.53 & 67.28 & 44.30 & \textbf{95.80} & 78.56 & 63.54 & $+41.1$ \\
\textsc{priv}    & 92.51 & \textbf{67.97} & \textbf{47.35} & 95.71 & 79.43 & 63.60 & $+47.2$ \\
SOLO             & \textbf{92.57} & 67.22 & 46.14 & 95.34 & 79.13 & 63.31 & $+41.1$ \\
\textsc{Pretrained} SOLO & 92.27 & 67.63 & 46.48 & 95.43 & \textbf{79.55} & \textbf{64.59} & $+41.1$ \\
\bottomrule
\end{tabular}
\end{table}

\begin{table}[!htb]
\caption{ImageNet-1k ($224^2$, batch 256, 90 epochs; mean over three seeds). Memory change
is relative to the end-to-end baseline of the same backbone. Bold: best per column within
each backbone.}
\label{tab:imagenet}
\centering
\footnotesize
\setlength{\tabcolsep}{5pt}
\begin{tabular}{llccc}
\toprule
\textbf{Backbone} & \textbf{Variant} & \textbf{Split} & \textbf{Top-1} & \textbf{Peak mem} \\
\midrule
\multirow{5}{*}{ResNet-50}
 & BP                     & ---            & \textbf{76.49} & 25.3\,GB \\
\cmidrule(l){2-5}
 & \textsc{priv}, $K{=}2$ & $[5\,|\,11]$   & 76.27          & 18.3\,GB ($-28\%$) \\
 & SOLO, $K{=}2$          & $[5\,|\,11]$   & 76.22          & 18.3\,GB ($-28\%$) \\
 & \textsc{priv}, $K{=}4$ & $[2,3,6,5]$    & 75.15          & 14.0\,GB ($-45\%$) \\
 & SOLO, $K{=}4$          & $[2,3,6,5]$    & 75.03          & \textbf{13.7}\,GB ($-45\%$) \\
\midrule
\multirow{3}{*}{ResNet-101}
 & BP                     & ---            & \textbf{76.87} & 42.3\,GB \\
\cmidrule(l){2-5}
 & \textsc{priv}, $K{=}4$ & $[3,6,11,13]$  & 76.61          & 16.6\,GB ($-61\%$) \\
 & SOLO, $K{=}4$          & $[3,6,11,13]$  & 76.56          & \textbf{15.6}\,GB ($-63\%$) \\
\bottomrule
\end{tabular}
\end{table}

\begin{table}[t]
\centering
\caption{Language points of Figure~\ref{fig:readout-ladder}b: the
random-readout penalty,
$(\mathrm{ppl}_{\textsc{rand}}/\mathrm{ppl}_{\textsc{priv}}-1)\times100\%$,
against the number of output classes per unit width, $V/d$. (a) Width sweep
at fixed vocabulary, the runs of Table~\ref{tab:lm-ladder-scale}. (b) Vocabulary
sweep at fixed width; for this sweep we report only the penalty. The two
sweeps meet at $V/d{=}64$ ($d{=}512$, $V{=}32{,}768$), where they give 29.80
and 29.72\%, a difference within run-to-run noise.}
\label{tab:lm-readout-penalty}
\footnotesize
\setlength{\tabcolsep}{4pt}
\begin{minipage}[t]{0.60\linewidth}
\centering
\begin{tabular}{@{}rrrrrr@{}}
\toprule
\multicolumn{6}{c}{(a) Width sweep, $V{=}32{,}768$}\\
\midrule
 & & & \multicolumn{2}{c}{Perplexity} & \\
\cmidrule(lr){4-5}
Size & $d$ & $V/d$ & \textsc{rand} & \textsc{priv} & Penalty (\%)\\
\midrule
40M  & 384  & 85.3 & 98.88 & 75.34 & 31.25\\
64M  & 512  & 64.0 & 81.63 & 62.89 & 29.80\\
128M & 768  & 42.7 & 63.76 & 51.10 & 24.77\\
256M & 1024 & 32.0 & 55.34 & 44.57 & 24.16\\
340M & 1024 & 32.0 & 54.12 & 43.82 & 23.51\\
512M & 1280 & 25.6 & 48.99 & 40.24 & 21.74\\
\bottomrule
\end{tabular}
\end{minipage}\hfill
\begin{minipage}[t]{0.36\linewidth}
\centering
\begin{tabular}{@{}rrr@{}}
\toprule
\multicolumn{3}{c}{(b) Vocabulary sweep, $d{=}512$}\\
\midrule
 & & \\
$V$ & $V/d$ & Penalty (\%)\\
\midrule
8{,}192   & 16  & 19.95\\
32{,}768  & 64  & 29.72\\
65{,}536  & 128 & 32.05\\
131{,}072 & 256 & 33.16\\
\bottomrule
\end{tabular}
\end{minipage}
\end{table}

\paragraph{Head depth.}
Table~\ref{tab:hdgrid} and Figure~\ref{fig:hdgrid} vary the depth $H$ of the auxiliary head on
the sweep of Table~\ref{tab:lm-ladder-scale} at three sizes, for SOLO and \textsc{priv}; $H{=}0$
is a linear readout and $H{=}8$, run at 256M only, an over-deep control. Section~\ref{sec:ablations} reads the gap to BP and the advantage of sharing off
these numbers.

\begin{table}[H]
\caption{Auxiliary-head depth on the WikiText-103 sweep, validation perplexity (word-level,
$V{=}32{,}768$, one epoch, $K{=}4$, seed 42) with $H$ blocks per head; $H{=}8$ is run at 256M
only, as an over-deep control. The last column is the end-to-end BP reference of
Table~\ref{tab:lm-ladder-scale}. This grid re-reads the shared copy every step, whereas the sweep
of Table~\ref{tab:lm-ladder-scale} refreshed it every 100 steps, so the $H{=}2$ cells differ
slightly. Bold: best local variant per row pair.}
\label{tab:hdgrid}
\centering
\footnotesize
\setlength{\tabcolsep}{6pt}
\begin{tabular}{l l cccccc c}
\toprule
\textbf{Scale} & \textbf{Variant} & $H{=}0$ & $H{=}1$ & $H{=}2$ & $H{=}3$ & $H{=}5$ & $H{=}8$ & BP \\
\midrule
\multirow{2}{*}{40M, 8L, $d{=}384$}
 & SOLO          & \textbf{76.13} & \textbf{72.06} & \textbf{70.05} & \textbf{69.58} & \textbf{70.10} & --- & \multirow{2}{*}{67.08} \\
 & \textsc{priv} & 81.75 & 76.48 & 75.36 & 73.77 & 73.91 & --- & \\
\midrule
\multirow{2}{*}{128M, 16L, $d{=}768$}
 & SOLO          & \textbf{49.93} & \textbf{48.31} & \textbf{47.38} & \textbf{46.75} & \textbf{46.22} & --- & \multirow{2}{*}{44.46} \\
 & \textsc{priv} & 52.98 & 51.87 & 51.06 & 49.90 & 49.44 & --- & \\
\midrule
\multirow{2}{*}{256M, 16L, $d{=}1024$}
 & SOLO          & \textbf{43.95} & \textbf{42.90} & \textbf{41.95} & \textbf{41.56} & \textbf{41.18} & \textbf{41.48} & \multirow{2}{*}{40.47} \\
 & \textsc{priv} & 46.94 & 45.54 & 44.54 & 44.10 & 43.87 & 44.12 & \\
\bottomrule
\end{tabular}
\end{table}

\begin{figure}[H]
\centering
\includegraphics[width=\textwidth]{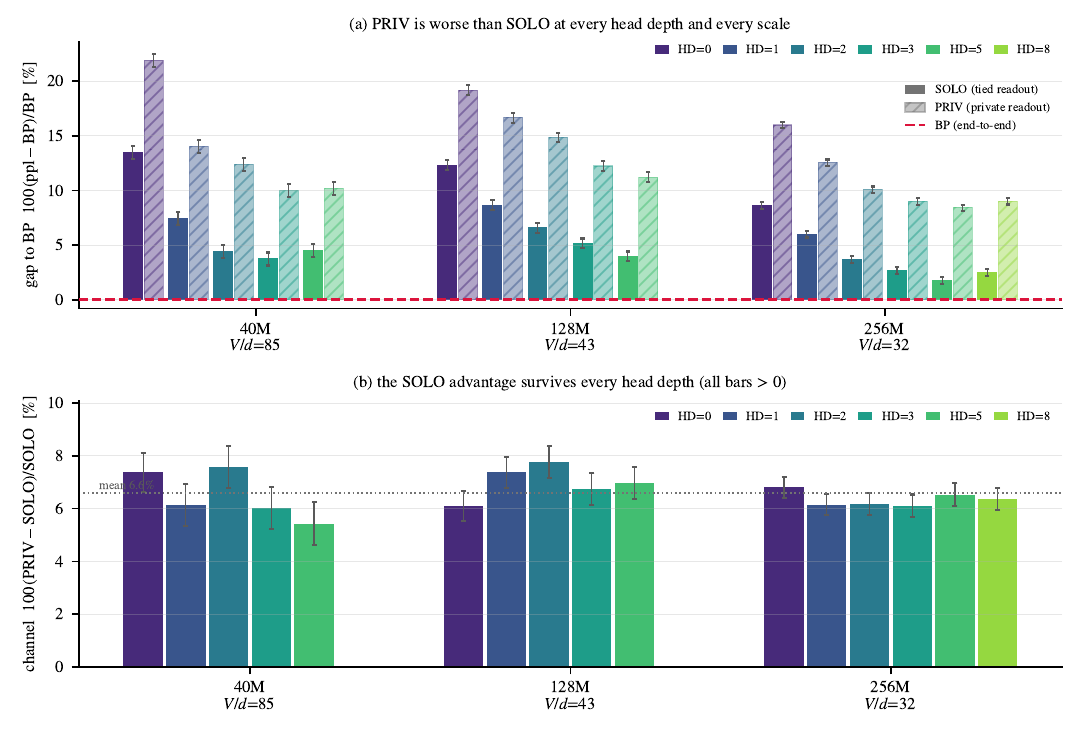}
\caption{Auxiliary-head depth on the WikiText-103 sweep (Table~\ref{tab:hdgrid}). (a) Gap to
BP, $100(\mathrm{ppl}-\mathrm{BP})/\mathrm{BP}$, for SOLO (solid) and \textsc{priv} (hatched) at
$H\in\{0,1,2,3,5\}$, and $8$ at 256M. (b) Advantage of the shared readout,
$100(\textsc{priv}-\mathrm{SOLO})/\mathrm{SOLO}$; the dotted line is the mean over the sixteen
cells, 6.6\%. Bars are seed-42 runs; error bars are the seed-to-seed standard deviation estimated
from paired second-seed runs, 0.40 ppl at 40M and 0.12 at 256M, and inferred at 128M.}
\label{fig:hdgrid}
\end{figure}

\paragraph{Content and basis of the readout.}
Table~\ref{tab:readout-arms} and Figure~\ref{fig:readout-arms} separate what the shared readout provides. \textsc{rot} multiplies the terminal readout by a fixed
random orthogonal matrix $R_k$ per module. The rotation keeps the content
and the range of the readout and changes only its basis. It costs 13.3
perplexity at $H{=}0$ and 7.4 at $H{=}2$ over SOLO. \textsc{rshare} shares
one frozen random matrix across heads and beats \textsc{rand}, one matrix
per head, by 12.0 and 4.2. Each effect exceeds the paired-seed standard
deviation of 0.40 by more than ten times. The gap between \textsc{priv} and
SOLO stays at 5.5 and 5.3, so the advantage of sharing in
Table~\ref{tab:hdgrid} does not depend on head depth.

The parameter-space cosine between the local and the end-to-end gradient
orders the variants as the perplexities do, and its pattern across modules
shows why the basis matters. Each module adds its update to its input, so
its output keeps the basis of the modules before it. The first module also
learns the token embedding and can set its own basis. Under \textsc{rot} at
$H{=}0$, its local gradient aligns with the end-to-end gradient as well as
under SOLO (0.80 for both). Modules 2 and 3 inherit a basis that their heads
do not share, and their alignment falls to 0.67 and 0.61, against 0.78 and
0.84 under SOLO. Two head blocks learn part of the translation between bases
and halve both the perplexity cost and the alignment gap on the last module,
from 0.23 to 0.11. The heads cannot undo $R_k$ exactly, since each ends in a skip connection and an RMSNorm with an element-wise gain, and neither
commutes with $R_k$.

\begin{table}[H]
\caption{Content and basis of the auxiliary readout on the 40M WikiText-103 sweep (8 layers,
$d{=}384$, $V{=}32{,}768$, one epoch, $K{=}4$, copy refreshed every step; seed 42, paired-seed
$\sigma{=}0.40$ ppl). Terminal validation perplexity with $H{=}0$ and $H{=}2$ head blocks, and the
parameter-space cosine between the local and the end-to-end gradient of modules 1 to 3 at the end
of training. \textsc{rot} uses the terminal readout in a fixed random orthogonal basis $R_k$ per
module, so its range equals SOLO's; \textsc{rshare} freezes one random matrix shared by all heads,
\textsc{rand} one per head. $^{\dagger}$\textsc{priv} at $H{=}2$ is the matched run of
Table~\ref{tab:hdgrid}; the two scripts reproduce each other to within 0.04 ppl.}
\label{tab:readout-arms}
\centering
\footnotesize
\setlength{\tabcolsep}{5pt}
\begin{tabular}{l l cc cc}
\toprule
 & & \multicolumn{2}{c}{\textbf{Perplexity} $\downarrow$} & \multicolumn{2}{c}{\textbf{Parameter-space cosine} $\uparrow$} \\
\cmidrule(lr){3-4} \cmidrule(lr){5-6}
\textbf{Arm} & \textbf{Readout of module $k<K$} & $H{=}0$ & $H{=}2$ & $H{=}0$ & $H{=}2$ \\
\midrule
\rowcolor{refrow}
BP & end-to-end, no auxiliary readout & \multicolumn{2}{c}{67.08} & --- & --- \\
SOLO           & $\mathrm{sg}(\bar W)$, shared, live copy   & \textbf{76.16} & \textbf{70.08} & .80 / .78 / .84 & .68 / .80 / .90 \\
\textsc{priv}  & $W_k$, private, learned                          & 81.71 & 75.36$^{\dagger}$ & .75 / .64 / .57 & --- \\
\textsc{rot}   & $\mathrm{sg}(\bar W)R_k$, same content, per-module basis & 89.41 & 77.44 & .80 / .67 / .61 & .63 / .74 / .79 \\
\textsc{rshare} & $W_{\mathrm{rand}}$, frozen, shared by all heads & 104.36 & 94.62 & .50 / .31 / .28 & .46 / .48 / .37 \\
\textsc{rand}  & $W_{\mathrm{rand},k}$, frozen, one per head       & 116.36 & 98.79 & .49 / .33 / .30 & .45 / .47 / .36 \\
\bottomrule
\end{tabular}
\end{table}

\begin{figure}[H]
\centering
\includegraphics[width=0.7\textwidth]{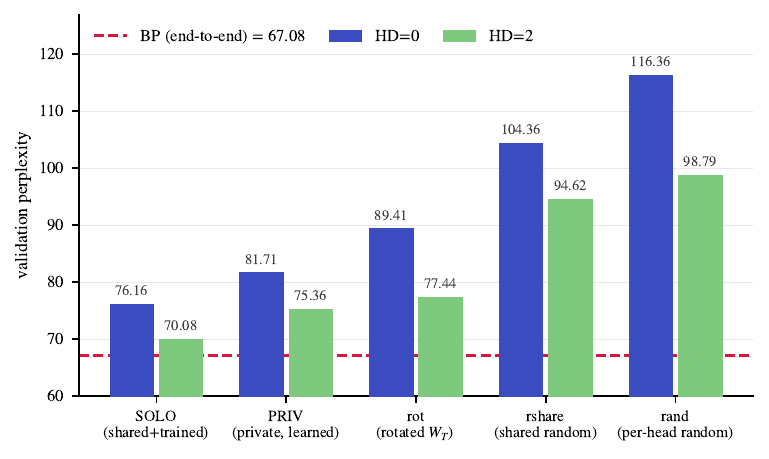}
\caption{The five variants of Table~\ref{tab:readout-arms} at $H{=}0$ and $H{=}2$, terminal validation
perplexity on the 40M sweep; BP dashed. A per-module basis of the same readout (\textsc{rot})
costs 13 ppl at $H{=}0$ and 7 at $H{=}2$; one shared random matrix (\textsc{rshare}) beats
independent ones (\textsc{rand}) by 12 and 4.}
\label{fig:readout-arms}
\end{figure}

\section{Every depth of a SOLO model decodes}
\label{app:exits}

Table~\ref{tab:exit-vs-lens} sets the auxiliary exits of the 1.3B models against the logit lens
applied to the BP model at the same depths. A SOLO exit decodes through the head and shared readout
it was trained with; the lens applies the BP terminal norm and unembedding to an intermediate
state that was never trained to be read.

\begin{table}[h]
\caption{Exits against the logit lens at 1.3B, ppl by depth, mean over three seeds; 24L is the
terminal. On the battery
(geometric-mean ppl over 27 prompts) every SOLO exit is within 2 ppl of its terminal and the BP
lens is not; SOLO exits match the BP terminal's top prediction 71 to 75\% of the time, the lens
16 to 51\%.}
\label{tab:exit-vs-lens}
\centering
\footnotesize
\setlength{\tabcolsep}{5pt}
\begin{tabular}{l cccc cccc}
\toprule
 & \multicolumn{4}{c}{\textbf{WikiText ppl}} & \multicolumn{4}{c}{\textbf{Battery ppl}} \\
\cmidrule(lr){2-5} \cmidrule(lr){6-9}
\textbf{Variants} & 8L & 14L & 20L & 24L & 8L & 14L & 20L & 24L \\
\midrule
SOLO, $K{=}4$ & 28.20 & 24.83 & 24.17 & 23.59 & 29.4 & 28.4 & 29.0 & 27.8 \\
SOLO, $K{=}2$ & ---   & 24.16 & ---   & 22.29 & ---  & 22.6 & ---  & 21.7 \\
BP logit lens & ---   & ---   & ---   & 21.61 & 1347 & 290  & 45.9 & 19.5 \\
\bottomrule
\end{tabular}
\end{table}

\paragraph{Exits at the token level.}
Figure~\ref{fig:token-heat} shows what the exits of Table~\ref{tab:exit-vs-lens} do on one
passage, word by word. The 6-layer exit already reads most of the passage; each deeper exit
changes a handful of words, most often the ones that depend on an earlier mention, and the
terminal differs from the end-to-end model on a few positions rather than everywhere.

\begin{figure}[!htb]
\centering
\includegraphics[width=\textwidth]{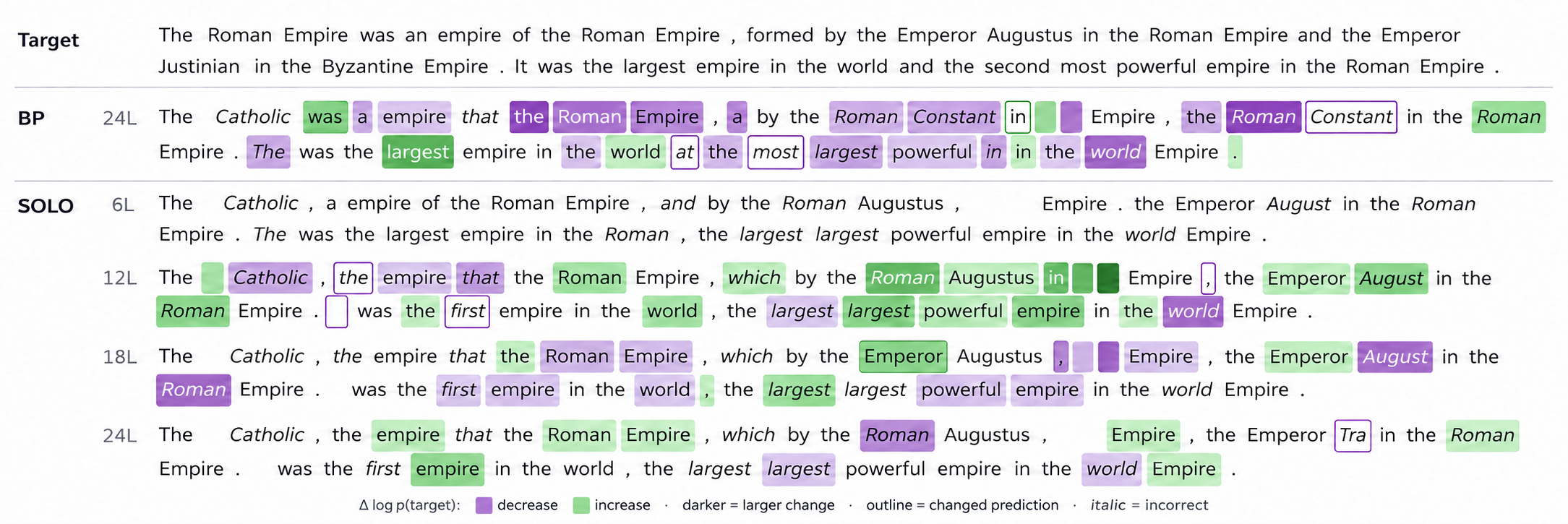}
\caption{One passage decoded word by word by every exit of the 1.3B $K{=}4$ model. A row gives
one readout's next-word prediction, the word itself when correct and the prediction in italics
when not; shading is the change in $\log p$ of the true word from the previous exit (legend in the
figure). Rows are labeled by model depth; 6, 12, and 18 layers are the 8L, 14L, and 20L exits of
Table~\ref{tab:exit-vs-lens}.}
\label{fig:token-heat}
\end{figure}

\paragraph{Terminal-readout probes on the sweep.}
Figure~\ref{fig:lens} asks whether the shared readout reaches the module outputs themselves
rather than only the outputs of the auxiliary heads. On the 40M sweep of
Figure~\ref{fig:align}, the output of every module under each variant is decoded with that variant's own terminal readout, bypassing the auxiliary heads, and compared with BP's logit
lens at the matched depth. Under SOLO and under the under PRETRAINED (SOLO) the intermediate
states are already decodable by the terminal readout, with top-1 agreement with BP of about
0.6 to 0.75 from the from the first module on, whereas under \textsc{priv}, \textsc{rand}, and the
PRETRAINED (BP) agreement stays near 0.1 to 0.25 until the final module, where all variants meet. The module outputs, not only the head outputs, are thus written in a form the
terminal readout reads.

\begin{figure}[!htb]
\centering
\includegraphics[width=\textwidth]{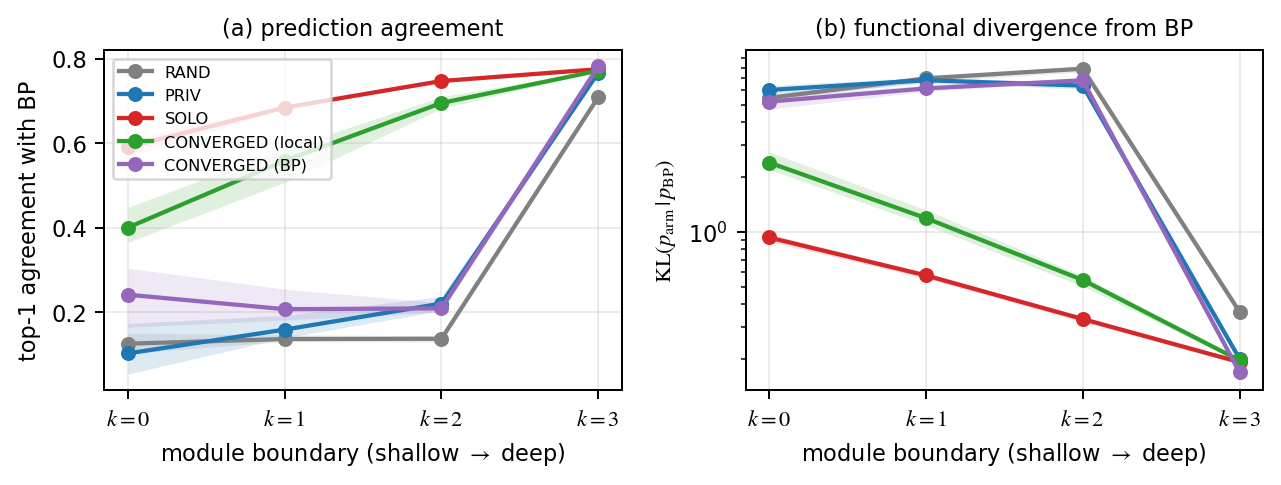}
\caption{Terminal-readout probes across depth on the 40M WikiText-103 sweep. The output of each module is decoded with the model's own terminal readout and compared with
BP's logit-lens distribution at the matched depth. (a) Top-1 prediction agreement.
(b) $\mathrm{KL}(p\,\|\,p_{\mathrm{BP}})$ on a logarithmic scale. Means over three
seeds; bands are the minimum and maximum across seeds. SOLO and \textsc{pretrained} (local)
agree more and diverge less at intermediate boundaries than \textsc{priv}, \textsc{rand}, and
\textsc{pretrained} (BP); the differences narrow at the terminal modules ($k{=}3$). The probe
bypasses the auxiliary heads, so it measures terminal-readout decoding of intermediate
representations, not the predictions of the trained auxiliary exits.}
\label{fig:lens}
\end{figure}

\section{Refresh period of the shared readout}
\label{app:sync}
\begin{table}[!htb]
\caption{Refresh period $S$ of the shared readout copy (WikiText-103 recipe of
Table~\ref{tab:wt103-pubrecipe-sp}, $K{=}4$, one epoch of 7242 steps; mean over three seeds). Subword
perplexity with word-level in parentheses; exits listed shallow to deep; ens is the exit
ensemble. $\Delta$ is the change in validation perplexity relative to $S{=}1$.}
\label{tab:sync}
\centering
\footnotesize
\setlength{\tabcolsep}{4pt}
\begin{tabular}{rr c c c c r}
\toprule
$S$ & refreshes/epoch & val ppl & exits, shallow to deep & ens & test ppl & $\Delta$ val \\
\midrule
1   & 7242 & 65.93 (129.0) & 71.14 / 68.12 / 67.43 / 65.93 & 66.70 & 67.55 & --- \\
10  & 724  & \textbf{65.69} (128.5) & 71.00 / 67.88 / 67.17 / 65.69 & \textbf{66.48} & \textbf{67.33} & $-0.24$ ($-0.4\%$) \\
50  & 145  & 66.16 (129.5) & 71.44 / 68.26 / 67.59 / 66.16 & 66.92 & 67.73 & $+0.23$ ($+0.3\%$) \\
100 & 72   & 67.16 (131.8) & 72.19 / 69.21 / 68.59 / 67.16 & 67.91 & 68.71 & $+1.23$ ($+1.9\%$) \\
200 & 36   & 67.93 (133.6) & 73.04 / 69.99 / 69.35 / 67.93 & 68.67 & 69.48 & $+2.00$ ($+3.0\%$) \\
\bottomrule
\end{tabular}
\end{table}

With $S{=}1$ the copy is re-read after every terminal update; with $S{>}1$ it is frozen and
refreshed every $S$ optimizer steps, the form whose throughput Section~\ref{sec:resources}
measures across devices.
Table~\ref{tab:sync} varies $S$ on the WikiText-103 recipe of
Table~\ref{tab:wt103-pubrecipe-sp} with $K{=}4$ and two-block heads over one epoch of 7242
steps, three seeds per setting. Up to $S{=}50$ the terminal stays within 0.3 perplexity of
per-step reading, and $S{=}10$ is marginally the best setting; $S{=}100$ costs 1.9\% and
$S{=}200$ 3.0\%, with every exit moving together. The cost of a period is set by how far the readout moves within it. Figure~\ref{fig:drift} tracks the relative change
$r_S(t)=\|W(t)-W(t-S)\|_F/\|W(t-S)\|_F$ along the same runs. After warmup the readout moves
by about 0.1\% of its norm per step, 1\% over 10 steps, 3\% over 50, 5\% over 100, and 10\%
over 200, and every curve falls with the cosine schedule. Set against Table~\ref{tab:sync}, a
copy is harmless while it lags the readout by up to about 3\% of its norm and costs 1.9\% at
5\%. The pretraining runs of Table~\ref{tab:slim-untie} re-read the copy at every step
($S{=}1$) and pay none of this cost; the periods measured across devices in
Section~\ref{sec:resources}, 50 steps and below, lie in the harmless range. In
Figure~\ref{fig:align} the alignment of the live copy coincides with that of a the pretrained
copy after the first two thousand steps, so the tolerance to a delayed copy should grow rather
than shrink over a longer run.

\begin{figure}[!htb]
\centering
\includegraphics[width=0.75\textwidth]{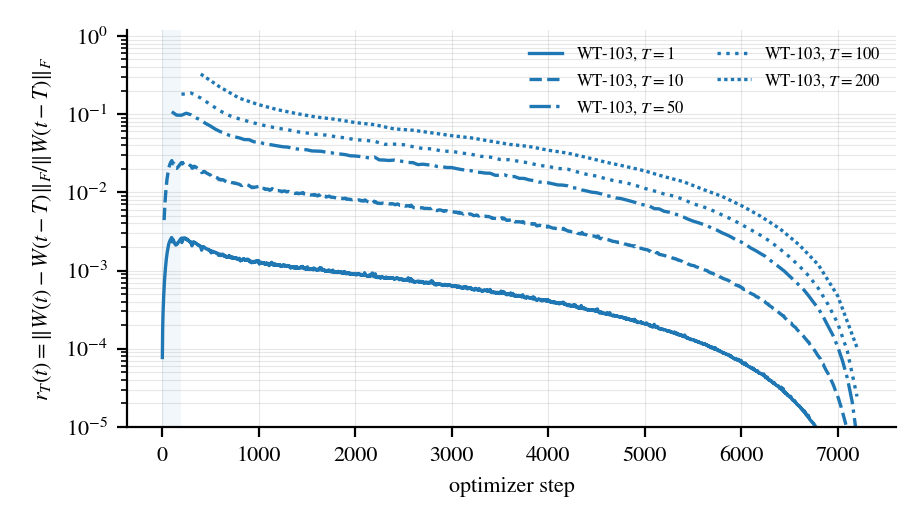}
\caption{Relative change of the terminal readout over one refresh period,
$r_S(t)=\|W(t)-W(t-S)\|_F/\|W(t-S)\|_F$, along the WikiText-103 runs of
Table~\ref{tab:sync} for $S\in\{1,10,50,100,200\}$. The shaded band is the learning-rate
warmup; the fall at the end follows the cosine decay.}
\label{fig:drift}
\end{figure}

\section{Systems measurements}
\label{app:systems}

\begin{figure}[t]
\centering
\includegraphics[width=0.8\textwidth]{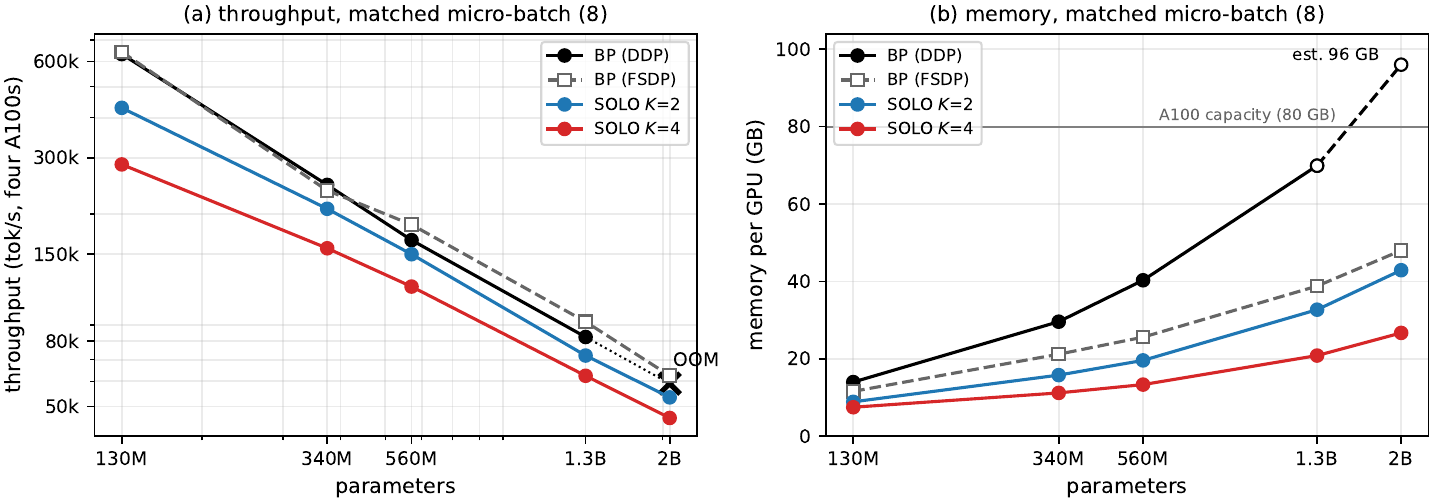}
\caption{Throughput (a) and per-GPU memory (b) against model size on four A100s at matched
micro-batch 8, for replicated (DDP) and sharded (FSDP) backpropagation and the SOLO pipelines.
DDP does not fit at 2B; its memory there is extrapolated (dashed).}
\label{fig:sys-scale}
\end{figure}

\paragraph{Against data-parallel backpropagation.}
This appendix compares SOLO and BP when both use data parallelism. BP is
replicated (DDP) at 340M and 1.3B and sharded (FSDP) at 2B, as in
Table~\ref{tab:slim-untie}, and we also measure FSDP at the smaller sizes.
SOLO is replicated at every size. Figure~\ref{fig:sys-scale} follows the four
configurations from 130M to 2B parameters on four A100s at a matched
micro-batch. Against DDP, the throughput ratio of SOLO rises with width, to
0.88 at $K{=}2$ and 0.76 at $K{=}4$ by 1.3B, because the auxiliary heads are
a fixed cost per module that shrinks relative to the model. At 2B,
replicated DDP no longer fits an 80\,GB device. FSDP is the stronger
baseline. It overtakes DDP from 560M and fits 2B in 48\,GB at 62k tokens per
second. Against FSDP, SOLO runs at 0.78 to 0.88 of the throughput from 340M
up with $K{=}2$ (0.86 at 2B) and at 0.64 to 0.74 with $K{=}4$. SOLO still
uses 11 to 25\% less memory at $K{=}2$ and 35 to 48\% less at $K{=}4$,
because each module releases its activations after its local backward pass.

\paragraph{Setup.}
Figure~\ref{fig:sys-scale} measures five Transformer++ shapes, from 130M parameters
(12 layers, width 768) to 2B (24 layers, width 2560), on four A100s with fused execution,
micro-batch 8 per device for DDP and per pipeline stage, an effective batch of 524{,}288
tokens, and marginal throughput over steps 50 to 150. The throughput ratio rises from
about 0.68 to 0.90 at $K{=}2$ and from about 0.45 to 0.76 at $K{=}4$. At 2B replicated
DDP allocates 75.6\,GB, requests a further 1.95\,GB, and fails on an 80\,GB device; the
estimated 96\,GB requirement is the reference for the 2B memory savings, and the pipeline
trains in 27 to 43\,GB. With each configuration at its own memory-optimal micro-batch, DDP fits 2B
at micro-batch 2 and runs at 46.6k tokens per second in 55.2\,GB; the $K{=}2$ pipeline
runs at 53.4k in 42.9\,GB and the $K{=}4$ pipeline at 44.9k in 26.7\,GB. Table~\ref{tab:slim-untie} reports the sweep values at 340M and 2B and the training-run
values at 1.3B. The FSDP configuration uses PyTorch fully sharded data parallelism with
full sharding and per-block wrapping, bf16 all-gather with fp32 reduction, compiled
execution, and the same optimizer, schedule, micro-batch, effective batch, and 150-step probe
as every other configuration; a same-day DDP rerun at 560M reproduced the earlier sweep value to within
1\%. Memory for DDP and FSDP is the rank-0 peak of a training step, which is symmetric across
ranks; memory for the pipelines is the four-GPU peak average.

\paragraph{Two baselines at 1.3B.}
At 1.3B, Table~\ref{tab:slim-untie} reports the training runs, whose DDP baseline used
micro-batch 4, 76k tokens per second in 48.9\,GB; against it the $K{=}2$ pipeline (77k, 32.7\,GB)
is at parity in throughput with 33\% less memory. At micro-batch 8, DDP runs at 82.5k tokens per
second in 69.9\,GB; against it the $K{=}2$ pipeline saves 53\% of memory at 0.88 of the
throughput and the $K{=}4$ pipeline 70\% at 0.76. FSDP at
1.3B runs at 92.1k tokens per second in 38.8\,GB; against it the $K{=}2$ pipeline saves 16\%
of memory at 0.78 of the throughput and the $K{=}4$ pipeline 46\% at 0.68. All describe the
same pipelines against different baselines. At 340M the two setup
agree, 15.8 against 29.6\,GB at 0.84 of DDP's throughput.

\paragraph{Interconnect.}
On NVLink, DDP's gradient all-reduce costs under 1\% of a step and the pipeline pays for
its auxiliary heads with nothing to win back. Table~\ref{tab:tax-link} removes the
RDMA-class link: NCCL is forced onto its socket transport, the path it takes on any network
without RDMA, over the loopback interface of one node. The model, GPUs per configuration, execution,
and batch are those of the 340M runs. Leaving NVLink costs DDP 72\% of its throughput and
the pipeline 8\%, so the ratio inverts to 2.2, because DDP's per-step all-reduce moves the
full gradient in one synchronized burst of latency-sensitive exchanges, whereas the
pipeline's traffic is a few large activation transfers per micro-batch overlapped with
compute. This is a single-node emulation; the same-split comparison against 1F1B under a
slowed link is Table~\ref{tab:link}.

\begin{table}[h]
\centering
\small
\caption{Throughput off RDMA-class interconnect (340M, four GPUs per arm), a single-node
emulation. Ratio is pipeline over DDP; busy is the fraction of wall-clock the pipeline stages
spend computing. The NVLink row is this session's
own reference and differs from the sweep of Table~\ref{tab:slim-untie} (247k, 156k) by 1 to 4\%.}
\label{tab:tax-link}
\begin{tabular}{lcccc}
\toprule
Interconnect & BP-DDP (tok/s) & SOLO $K{=}4$ (tok/s) & ratio & pipeline busy \\
\midrule
NVLink (reference)         & 250k  & 163k   & 0.65 & 0.96 \\
socket transport, unshaped & 69.2k & 150.1k & 2.2  & 0.92 \\
\bottomrule
\end{tabular}
\end{table}

\paragraph{What is not measured.}
FSDP is measured on NVLink only. Off NVLink it does not change the comparison, since sharding
moves a parameter-sized all-gather and reduce-scatter across the link every step, at least
as much traffic as DDP's all-reduce, so the inversion of Table~\ref{tab:tax-link} would only
grow; we did not run it. Pipelined backpropagation, which trades the 2B memory cliff for
1F1B bubbles, is compared with SOLO under the same split at two, six, and eight stages in
Appendix~\ref{app:pipeline-bp}, on a different model shape from the one measured here.

\paragraph{FLOPs and utilization.}
Table~\ref{tab:flops} counts training FLOPs per token for BP and for SOLO, separating the model layers from the auxiliary blocks, terminal included, and Table~\ref{tab:mfu} converts the measured
throughput into model FLOPs utilization (MFU). The speed comparison is against DDP and FSDP as
PyTorch provides them, so the denominator is plain BP rather than an implementation that hides
communication behind compute.

\begin{table}[t]
\centering
\caption{Training FLOPs per token. One forward pass counts as one unit, so a block costs $3b$
with $b=12d^{2}+2Td$ multiply-accumulate operations per token, the terminal readout costs $3r$
with $r=Vd$, and an auxiliary readout costs $2r$, since its weights are a detached copy and no
gradient with respect to them is computed. One multiply-accumulate is two FLOPs. Embedding
lookups are excluded. The SlimPajama rows use context $2048$ and a $32$k vocabulary and assume
two-block auxiliary heads; the last row is the configuration of Table~\ref{tab:pipe-bp}. The
auxiliary blocks, not the readouts, account for most of the difference.}
\label{tab:flops}
\footnotesize
\setlength{\tabcolsep}{4pt}
\begin{tabular}{lrrrrrrrrrr}
\toprule
& & & & & \multicolumn{2}{c}{BP} & \multicolumn{3}{c}{SOLO} & \\
\cmidrule(lr){6-7}\cmidrule(lr){8-10}
model & $L$ & $d$ & $K$ & $H$ & total & layers & total & aux.\ blocks & readouts & $\rho$ \\
\midrule
340M          & 24 & 1024 & 2 & 2 & 2.61  & 2.42  & 2.94  & 0.20 & 0.33 & 1.127 \\
1.3B          & 24 & 2048 & 2 & 2 & 8.85  & 8.46  & 9.82  & 0.70 & 0.66 & 1.109 \\
2B            & 24 & 2560 & 4 & 2 & 13.33 & 12.83 & 17.52 & 3.21 & 1.48 & 1.315 \\
96-block 1.2B & 96 & 1024 & 8 & 1 & 8.51  & 8.46  & 9.36  & 0.62 & 0.29 & 1.100 \\
\bottomrule
\end{tabular}

\vspace{2pt}
{\footnotesize All totals in GFLOPs per token. $\rho$ is defined in \eqref{eq:rho}. It is sensitive to
the head depth and to the number of modules: for the 340M shape it is $1.089$, $1.266$ and
$1.621$ at $K{=}2,4,8$ with $H{=}1$, and $1.127$, $1.382$ and $1.891$ with $H{=}2$.}
\end{table}

\begin{table}[t]
\centering
\caption{Model FLOPs utilization of the measured runs, computed from Table~\ref{tab:flops} and
the reported throughput, with $312$ TFLOPs as the bf16 peak of an A100 and throughput summed over
the devices of a run. The SOLO row uses its own FLOPs per token, so the auxiliary heads count as
work rather than as overhead. The 96-block configuration used for the pipeline comparison runs at
about a third of the utilization of the pretraining runs, because its blocks are narrow and its
module is not compiled; its absolute throughput is therefore not representative and only ratios
within a row of Table~\ref{tab:pipe-bp} should be read.}
\label{tab:mfu}
\small
\begin{tabular}{llrrrr}
\toprule
run & parallelism & GPUs & tok/s & GFLOPs/token & MFU \\
\midrule
1.3B, micro-batch 8 & BP, replicated DP & 4 & 82{,}500 & 8.85  & 58.5\% \\
1.3B                & BP, sharded DP    & 4 & 92{,}100 & 8.85  & 65.3\% \\
2B                  & BP, sharded DP    & 4 & 62{,}000 & 13.33 & 66.2\% \\
\midrule
96-block 1.2B       & BP, replicated DP & 8 & 65{,}471 & 8.51  & 22.3\% \\
96-block 1.2B       & BP, 1F1B pipeline & 8 & 59{,}441 & 8.51  & 20.3\% \\
96-block 1.2B       & SOLO, pipeline    & 8 & 59{,}770 & 9.36  & 22.4\% \\
\bottomrule
\end{tabular}
\end{table}

\section{Comparison with pipeline-parallel backpropagation}
\label{app:pipeline-bp}

Appendix~\ref{app:systems} compares SOLO with backpropagation under data parallelism. Update
locking costs most in pipeline parallelism, where each stage waits for the gradient of the next.
This appendix therefore gives both methods the same split into the same number of stages, so
that the only difference is whether a stage waits for that gradient.

\paragraph{Setup.}
A model of $L{=}96$ blocks with $d{=}1024$, $T{=}1024$ and $V{=}8192$, about $1.2$B parameters, is
split evenly across $p$ stages of one A100-80GB each, on one node with NVLink. Micro-batches hold
$B{=}4$ sequences and an optimizer step consumes $M$ of them, so both methods use the same global
batch and the same number of optimizer steps. SOLO places $K{=}p$ modules with one auxiliary block
per head ($H{=}1$) and a shared readout refreshed by a broadcast from the last stage every $S$
steps. Training is bf16 with AdamW on synthetic Zipf-distributed tokens. We report the throughput
of the slowest stage, peak memory from the allocator, and inter-stage traffic computed from tensor
shapes. Throughput is measured over 10 steps after 5 warm-up steps, and over 50 steps for the
configurations whose readout broadcast has to fall inside the window. Repeated measurements of the
same cell agree to within $0.4\%$ for 1F1B and SOLO and $0.7\%$ for recomputation. The BP baselines
are the 1F1B schedule \citep{narayanan2019pipedream}, the interleaved schedule (VPP) with $v$
virtual stages per GPU \citep{narayanan2021efficient}, and 1F1B with activation recomputation of
every block. VPP uses the PyTorch implementation, and so does a second 1F1B baseline, used as a
check on our own.

\paragraph{Cost model.}
Per token, a block costs $b=12d^{2}+2Td$ multiply-accumulate operations and a full-vocabulary
readout costs $r=Vd$. We count one forward pass as one unit, so a block costs $3b$ and the terminal
readout costs $3r$. An auxiliary readout costs $2r$, because its weights are a detached copy of the
terminal readout and no gradient with respect to them is computed. Hence $W_{\mathrm{BP}}=3Lb+3r$
and $W_{\mathrm{SOLO}}=3\bigl(L+(K-1)H\bigr)b+(2K+1)r$.

\begin{proposition}[Work ratio]
\label{prop:rho}
Let $\gamma \triangleq r/(Lb) = V/\bigl(L(12d+2T)\bigr)$ be the cost of one readout relative to
the $L$ blocks. Then
\begin{equation}
\rho \;\triangleq\; \frac{W_{\mathrm{SOLO}}}{W_{\mathrm{BP}}}
\;=\;
\frac{1+\dfrac{(K-1)H}{L}+\dfrac{(2K+1)\gamma}{3}}{1+\gamma},
\qquad
\rho-1 \;=\; (K-1)\,\frac{H/L+2\gamma/3}{1+\gamma}.
\label{eq:rho}
\end{equation}
\end{proposition}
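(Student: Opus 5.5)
The proof is a direct computation from the cost model stated just before the proposition. I take $W_{\mathrm{BP}}=3Lb+3r$ and $W_{\mathrm{SOLO}}=3\bigl(L+(K-1)H\bigr)b+(2K+1)r$ as given. The first step is to justify the SOLO count briefly, since it is the only place an error could enter. The $L$ model blocks each cost $3b$. The $K-1$ auxiliary heads of $H$ blocks each add $3(K-1)Hb$. The terminal readout costs $3r$. The $K-1$ auxiliary readouts cost $2r$ each, because they need a forward pass and a gradient with respect to the input but none with respect to the detached weights. This gives a readout total of $3r+2(K-1)r=(2K+1)r$, matching the stated expression.

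Next I divide the numerator and denominator of $\rho=W_{\mathrm{SOLO}}/W_{\mathrm{BP}}$ by $3Lb$. The denominator becomes $1+r/(Lb)=1+\gamma$. The numerator becomes $1+(K-1)H/L+(2K+1)\gamma/3$, which is the first form in \eqref{eq:rho}. For the closed form of $\gamma$, I substitute $r=Vd$ and $Lb=L(12d^2+2Td)=Ld(12d+2T)$ and cancel $d$ to obtain $V/\bigl(L(12d+2T)\bigr)$.

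For the second identity, I subtract $1=(1+\gamma)/(1+\gamma)$ from the first form. The numerator becomes $(K-1)H/L+\bigl((2K+1)/3-1\bigr)\gamma=(K-1)H/L+\tfrac{2(K-1)}{3}\gamma$. Factoring out $(K-1)$ gives $(K-1)\,(H/L+2\gamma/3)/(1+\gamma)$.

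No step is a real obstacle. The only point needing care is the accounting of the auxiliary readouts at $2r$ rather than $3r$, which produces the coefficient $2K+1$ and hence the $2\gamma/3$ term. As a sanity check, I would plug in the 340M row of Table~\ref{tab:flops}: $L=24$, $d=1024$, $T=2048$, $V=32768$, $K=2$, $H=2$. This should reproduce $\rho\approx1.127$.
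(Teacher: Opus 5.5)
Your proof is correct and follows the paper's own route. The proposition is just the stated cost model $W_{\mathrm{BP}}=3Lb+3r$, $W_{\mathrm{SOLO}}=3\bigl(L+(K-1)H\bigr)b+(2K+1)r$ divided through by $3Lb$, with the $K-1$ auxiliary readouts counted at $2r$ because their weights are detached, exactly as you count them.

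A small point on your sanity check. With $V=32{,}768$ the formula gives $\gamma=1/12$ and $\rho=1+5/39\approx1.128$, not $1.127$. The table's $1.127$, and its totals $2.61$ and $2.94$, correspond to taking the $32$k vocabulary as $V=32{,}000$.
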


Both terms in $\rho-1$ become small for large models. The readout term scales as
$\gamma\sim V/(Ld)$. The head term $(K-1)H/L$ is approximately $H/(L/p)$, the number of auxiliary
blocks divided by the number of blocks per stage. Width does not appear in the head term,
because model blocks and auxiliary blocks have the same cost at any width. To check the model
without pipeline effects we measured each stage separately with communication turned off and added
the times. At $L{=}24$ with $d$, $T$ and $V$ as above, $K{=}p{=}2$ and micro-batch $4$, the two
stages take $0.1297+0.1317=0.2614$\,s for backpropagation, $0.1330+0.1428=0.2759$\,s for SOLO with
$H{=}1$ and $0.1436+0.1428=0.2864$\,s with $H{=}2$, giving $\rho=1.0552$ and $1.0953$ against
predicted $1.0562$ and $1.0969$, a difference of at most $0.2\%$.

\paragraph{Condition for SOLO to be faster.}
With $M$ micro-batches per step the 1F1B schedule is idle for a fraction $\beta=(p-1)/(M+p-1)$ of
the time, because stage $s$ cannot run the backward pass of one micro-batch until stage $s{+}1$ has
returned the gradient of the previous one. A SOLO stage runs its backward pass without waiting, so
its pipeline has no idle time of this kind. SOLO has higher steady-state throughput when
$\rho<1/(1-\beta)=1+(p-1)/M$. With $p=K$ the factor $K-1$ in \eqref{eq:rho} cancels and the
condition becomes
\begin{equation}
M \;<\; M^{\star} \;=\; \frac{1+\gamma}{H/L+2\gamma/3}
\;\;\xrightarrow[\;\gamma\to 0\;]{}\;\; \frac{L}{H},
\label{eq:win}
\end{equation}
which does not depend on the number of stages. For the configuration above \eqref{eq:win}
gives $M^{\star}=69.9$.

\begin{table}[t]
\centering
\caption{Throughput under the same split, relative to 1F1B at the same $p$ and $M$. Values above
one mean the method is faster than 1F1B. Configuration (A): $L{=}96$, $d{=}1024$, $T{=}1024$,
$V{=}8192$, $H{=}1$, micro-batch $4$, $K{=}p$. A dash means the configuration was not run at that
$p$. The data-parallel runs use the same $p$ devices as full replicas with the same global batch,
and are included as context rather than as a same-split comparison.}
\label{tab:pipe-bp}
\small
\begin{tabular}{lrrrrrrrr}
\toprule
& \multicolumn{2}{c}{$p{=}2$} & \multicolumn{2}{c}{$p{=}4$} & \multicolumn{2}{c}{$p{=}6$} & \multicolumn{2}{c}{$p{=}8$} \\
\cmidrule(lr){2-3}\cmidrule(lr){4-5}\cmidrule(lr){6-7}\cmidrule(lr){8-9}
method & $M{=}24$ & $72$ & $24$ & $72$ & $24$ & $72$ & $24$ & $72$ \\
\midrule
BP, 1F1B                    & 1.000 & 1.000 & 1.000 & 1.000 & 1.000 & 1.000 & 1.000 & 1.000 \\
BP, 1F1B (PyTorch)          & --    & --    & --    & --    & --    & --    & 0.980 & 0.978 \\
BP, VPP-2                   & --    & --    & 1.017 & 0.982 & --    & --    & 1.067 & 0.990 \\
BP, VPP-4                   & --    & --    & 1.040 & 0.956 & --    & --    & 1.062 & 0.859 \\
BP, 1F1B + recomputation    & 0.766 & 0.768 & --    & --    & 0.750 & 0.745 & 0.748 & 0.736 \\
\midrule
SOLO, $S{=}50$              & \textbf{1.018} & 0.992 & \textbf{1.072} & 0.994 & \textbf{1.123} & 0.998 & \textbf{1.179} & 1.006 \\
SOLO, $S{=}10$              & --    & --    & 1.070 & 0.993 & --    & --    & 1.172 & 1.006 \\
SOLO, $S{=}1$               & --    & --    & 1.038 & 0.983 & --    & --    & 1.098 & 0.981 \\
\midrule
BP, data parallel           & 1.032 & 1.011 & --    & --    & 1.175 & 1.068 & 1.246 & 1.097 \\
SOLO, data parallel         & 1.020 & 0.999 & --    & --    & 1.108 & 1.008 & 1.147 & 1.011 \\
\bottomrule
\end{tabular}
\end{table}

\paragraph{Throughput.}
Table~\ref{tab:pipe-bp} and Figure~\ref{fig:pipe-bp} give the measurements. We fit $\rho$ on the
$M{=}72$ column of the SOLO row with $S{=}50$ and predict the $M{=}24$ column, which no fit uses.
The fitted values are $\rho=1.022$, $1.048$, $1.072$ and $1.091$ at $p=2$, $4$, $6$ and $8$; the
predicted ratios at $M{=}24$ are $1.019$, $1.073$, $1.127$ and $1.184$ against measured $1.018$,
$1.072$, $1.123$ and $1.179$. The per-module cost $(\rho-1)/(p-1)$ is $0.022$, $0.016$, $0.014$ and
$0.013$, against $0.0143$ from \eqref{eq:rho}; at $p{=}2$ the single auxiliary module absorbs all
fixed overhead. The measured crossover $M^{\star}=(p-1)/(\rho-1)$ is $45$, $63$, $69$ and $77$,
close to the $p$-independent value $69.9$ of \eqref{eq:win} except at $p{=}2$. As a check on the
schedule itself, the ratio of 1F1B throughput between the two values of $M$ agrees with
$M/(M+p-1)$ to within $0.3\%$ at every $p$, and the PyTorch 1F1B implementation runs at $0.98$ of
ours, so the baseline is not weakened by our implementation.

Measured against the fastest pipeline schedule in each cell rather than against 1F1B alone,
SOLO with $S{=}10$ is $10\%$ faster at $p{=}8$, $M{=}24$ and $0.6\%$ faster at $M{=}72$; with
$S{=}1$ it is $2.9\%$ faster and $1.9\%$ slower. VPP reaches $0.95$ of the throughput its own
bubble model predicts at $v{=}2$ and $0.88$ at $v{=}4$ for $M{=}24$, and $0.80$ at $v{=}4$ for
$M{=}72$, where it sends $69.8$\,GiB per step. An implementation that reached the bubble model
would match SOLO at $M{=}24$, so the margin at small $M$ depends on how efficiently interleaving
is implemented.

\begin{table}[t]
\centering
\caption{Memory and communication at $p{=}8$ and $M{=}24$, configuration (A). Peak memory is given
for the most loaded device and as the mean over the eight devices. Activations are the peak minus
resident state minus gradients. Traffic is computed from tensor shapes; the pipeline runs scale
linearly with $M$ and the data-parallel runs do not. The VPP and PyTorch 1F1B runs send
activations in fp32, which doubles their traffic relative to an implementation that sends bf16.}
\label{tab:pipe-mem}
\footnotesize
\setlength{\tabcolsep}{3pt}
\begin{tabular}{lrrrrr}
\toprule
method & peak GB, max & peak GB, mean & act.\ GB, max & act.\ GB, mean & traffic GiB/step \\
\midrule
BP, 1F1B                 & 19.8 & 12.1 & 17.4 & 9.8  & 2.6 \\
BP, 1F1B (PyTorch)       & 20.1 & 13.2 & 17.3 & 9.7  & 5.3 \\
BP, VPP-2                & 28.8 & 21.6 & 24.8 & 17.0 & 11.3 \\
BP, VPP-4                & 27.0 & --   & 20.9 & --   & 23.3 \\
BP, 1F1B + recomputation & 4.5  & 3.6  & 2.1  & 1.3  & 2.6 \\
SOLO, $S{=}50$           & \textbf{5.1} & \textbf{5.0} & \textbf{2.5} & \textbf{2.5} & \textbf{1.3} \\
\midrule
BP, data parallel        & 40.3 & 40.3 & 17.4 & 17.4 & 64.0 \\
SOLO, data parallel      & 27.1 & 27.1 & 2.5  & 2.5  & 68.6 \\
\bottomrule
\end{tabular}
\end{table}

\paragraph{Large $M$ and a second model shape.}
Figure~\ref{fig:pipe-three} extends Table~\ref{tab:pipe-bp} in two directions. At $p{=}8$ on the
96-block model, the ratio to 1F1B continues past the crossover to $0.966$ at $M{=}128$ and
$0.943$ at $M{=}256$ with the copy refreshed every 50 steps ($0.953$ and $0.936$ every step),
on the curve $(M+p-1)/(\rho M)$ with $\rho$ fitted at $M{=}72$ alone, and approaches $1/\rho=0.92$
from above, so the penalty at large $M$ is bounded by the arithmetic overhead. The 24-block
model of the pretraining runs ($d{=}2048$, $T{=}2048$, $V{=}32$k, $H{=}2$, micro-batch 1) has
12 to 3 blocks per stage as $p$ grows from 2 to 8 and loses with $p$: $0.91$, $0.87$, and
$0.82$ of 1F1B at $M{=}24$ and $0.88$, $0.81$, and $0.72$ at $M{=}72$. The per-module cost
$(\rho-1)/(p-1)$ recovered from these ratios is $0.08$ to $0.15$, against $0.013$ to $0.022$
on the 96-block model, and a one-block head at $p{=}4$ brings the ratio back to $0.98$. The sign
of the same-split comparison is thus set by the number of blocks per stage relative to
the head, as \eqref{eq:win} predicts; on shallow models the memory saving and the micro-batch
lever of Table~\ref{tab:mbsweep} are what remain. Figure~\ref{fig:pipe-three}c measures the
per-module cost directly, under data parallelism, where there is no bubble and $\rho$ is the ratio
of the two throughputs, for $K{=}2$, 4, 8 and $H{=}1$, 2, 4 on the 24-block model at micro-batch 8.
The nine cells fall on one line, $0.010+0.040H$, whose slope equals the head-block term of
\eqref{eq:rho}, $(1/L)/(1+\gamma)=0.040$, and whose intercept is a third of the readout term,
$0.030$, so the extra readouts cost less than their FLOP count. The values recovered from the
pipeline throughput ratios at $p{=}4$ have a slope 29\% higher; the difference is scheduling
and implementation overhead, not arithmetic.

\begin{figure}[!htb]
\centering
\includegraphics[width=\textwidth]{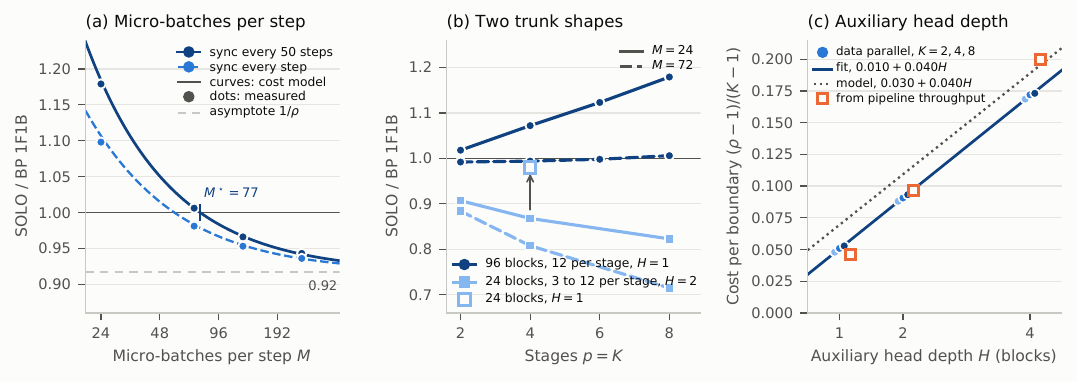}
\caption{Same-split throughput and the per-module cost. (a) SOLO relative to 1F1B against
micro-batches per step at $p{=}8$ on the 96-block model of configuration (A); curves are the cost
model with $\rho$ fitted at $M{=}72$, dots are measured, the dashed curve adds the synchronization
cost of a broadcast at every step, the gray line is $1/\rho$. (b) The same ratio against the number
of stages at $M{=}24$ (solid) and $72$ (dashed), for the 96-block model with $H{=}1$ and for the
24-block model of the pretraining runs ($d{=}2048$, $T{=}2048$, $V{=}32$k, micro-batch 1) with
$H{=}2$; the open marker is the 24-block model with $H{=}1$ at $p{=}4$. (c) Cost per module
$(\rho-1)/(K-1)$ against auxiliary-head depth under data parallelism on the 24-block
model (micro-batch 8, $M{=}24$), for $K{=}2$, 4, 8; solid line the fit, dotted line the FLOP
model of \eqref{eq:rho}, open squares the values recovered from pipeline throughput at $p{=}4$.}
\label{fig:pipe-three}
\end{figure}

\paragraph{Memory.}
The 1F1B schedule keeps $p-s+1$ micro-batches in progress on stage $s=1,\dots,p$, so that stage
stores $(p-s+1)(L/p)a$ activations, where $a$ is the activation of one block for one micro-batch.
The first stage therefore stores $La$, as much as the unsplit model, for any $p$, and the mean over
stages is $\frac{p+1}{2p}La$. A SOLO stage holds one micro-batch, that is $(L/p)a$ plus its
auxiliary head. Table~\ref{tab:pipe-mem} matches this. The activation peak of 1F1B is $17.25$,
$17.29$, $17.33$ and $17.37$\,GB at $p=2$, $4$, $6$ and $8$, and equals the $17.39$\,GB of a single
data-parallel replica; its ratio to SOLO is $1.94$, $3.73$, $5.40$ and $6.95$, against $p$ from the
model, and the ratio of the means at $p{=}8$ is $3.95$ against $(p+1)/2=4.5$. The difference is the
auxiliary head, which also raises the resident state of SOLO by $0.2$\,GB per device. Peak memory
on the most loaded device falls from $26.4$, $22.0$, $20.5$ and $19.8$\,GB to $18.3$, $9.5$, $6.6$
and $5.1$\,GB. Memory does not change with $M$ for either method, except in the PyTorch runs,
which hold one fp32 message per micro-batch.

\paragraph{Recomputation.}
Backpropagation can reach SOLO's activation memory by recomputing. Recomputing every block leaves
only block inputs, which brings the activation peak to $2.1$\,GB, below SOLO, at $0.74$ to $0.77$
of 1F1B throughput across $p$ and $M$, against $3(1+\gamma)/(4+3\gamma)=0.751$ from the cost model.
SOLO is then $1.29$ to $1.57$ times faster than this baseline while using $13$ to $35\%$ more
memory than it. Interpolating between the two backpropagation end points to the peak memory of
SOLO, which assumes that time and memory are linear in the fraction of recomputed blocks, gives
SOLO a factor of $1.19$, $1.47$ and $1.55$ at $M{=}24$ and $1.16$, $1.31$ and $1.35$ at $M{=}72$
for $p=2$, $6$ and $8$. Limiting the number of micro-batches in flight is the other way to save
memory at the cost of utilization, and a worse one: holding $k\le p$ of them gives activation
$k(L/p)a$ and throughput about $k/p$ of the bubble-free rate, so $k{=}1$ reaches the memory of
SOLO at $16\%$ of the throughput of 1F1B at $p{=}8$, $M{=}24$.

\paragraph{Micro-batch sweep.}
Table~\ref{tab:mbsweep} and Figure~\ref{fig:mbsweep} hold the global batch fixed at 96 sequences
per step, 98,304 tokens, at $p{=}8$ and vary the micro-batch from 1 to 32 sequences, so
$M{=}96/B$. Under 1F1B a larger micro-batch multiplies the activations in flight and shrinks
$M$, which widens the bubble $(p-1)/(M+p-1)$ from 23\% at $M{=}24$ to 70\% at $M{=}3$; its
throughput peaks at micro-batch 4, 50.6k tokens per second in 19.8\,GB, and falls to 25.4k at
micro-batch 32. A SOLO stage holds one micro-batch and has no bubble, so its throughput rises
with the micro-batch until the kernels saturate, to 72.9k at micro-batch 16 in 11.6\,GB and
72.4k at micro-batch 32 in 20.3\,GB. At each method's best micro-batch SOLO is $1.44\times$
faster, and at equal peak memory, 20\,GB, $1.43\times$. With the copy refreshed every step the
improvement is smaller, 60.4k at micro-batch 8, since the stages align at every refresh. SOLO's
throughput times the 1F1B bubble factor $M/(M+p-1)$, the dotted line in
Figure~\ref{fig:mbsweep}b, tracks the 1F1B curve to within the auxiliary-head overhead, so the
fall of 1F1B is the bubble and not arithmetic. The activation memory that gradient isolation
frees therefore converts into throughput through the micro-batch at a fixed global batch.

\begin{table}[!htb]
\centering
\caption{Micro-batch sweep at a fixed global batch of 96 sequences per step, configuration (A),
$p{=}8$. Peak memory on the most loaded device; SOLO with the copy refreshed every 50 steps and
every step.}
\label{tab:mbsweep}
\small
\begin{tabular}{rr rr rrr}
\toprule
 & & \multicolumn{2}{c}{BP, 1F1B} & \multicolumn{3}{c}{SOLO} \\
\cmidrule(lr){3-4}\cmidrule(lr){5-7}
$B$ & $M$ & tok/s & peak GB & tok/s, $S{=}50$ & tok/s, $S{=}1$ & peak GB \\
\midrule
1  & 96 & 21{,}980 & 8.4  & 20{,}272 & 20{,}994 & 3.5 \\
2  & 48 & 43{,}125 & 12.3 & 40{,}784 & 41{,}218 & 4.0 \\
4  & 24 & \textbf{50{,}576} & 19.8 & 59{,}333 & 55{,}537 & 5.1 \\
8  & 12 & 48{,}042 & 34.7 & 68{,}545 & \textbf{60{,}361} & 7.3 \\
16 & 6  & 37{,}943 & 48.5 & \textbf{72{,}877} & 57{,}387 & 11.6 \\
32 & 3  & 25{,}374 & 47.9 & 72{,}361 & 47{,}615 & 20.3 \\
\bottomrule
\end{tabular}
\end{table}

\begin{figure}[!htb]
\centering
\includegraphics[width=\textwidth]{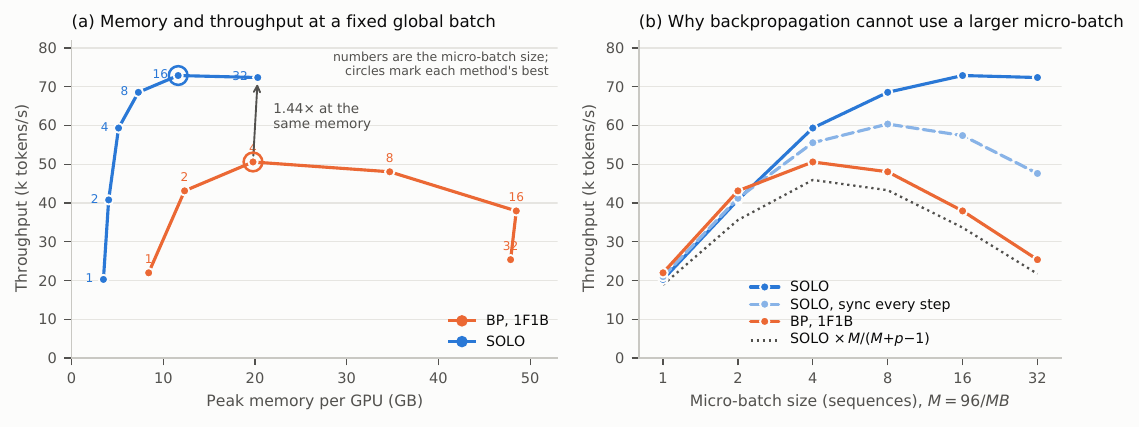}
\caption{Micro-batch sweep of Table~\ref{tab:mbsweep}. (a) Throughput against peak memory per
GPU, points labeled by micro-batch size, circles at each method's best. (b) Throughput against
micro-batch size; the dotted line is SOLO's throughput times the 1F1B bubble factor $M/(M+p-1)$.}
\label{fig:mbsweep}
\end{figure}

\paragraph{Readout synchronization.}
The broadcast sends $(p-1)(Vd+V)$ numbers every $S$ steps, $2.2$\,MiB per step at $S{=}50$ and
$112$\,MiB at $S{=}1$ in this configuration, against $2688$\,MiB of activations. Its cost is not
the transfer but the alignment it forces, since the stages, which otherwise run one forward pass
apart, must reach the same step. The throughput lost relative to $S{=}50$ follows
$0.24\,(p-1)/(M S)$, fitted through the origin on eight points. At $S{=}1$ the measured loss is
$3.2\%$ and $1.1\%$ at $p{=}4$ and $6.9\%$ and $2.4\%$ at $p{=}8$ for $M{=}24$ and $M{=}72$; at
$S{=}10$ it is at most $0.6\%$. SOLO sends less than 1F1B in total when $S\,M\,B\,T>V$. In this
configuration that holds by three orders of magnitude, but for $V{=}128$k, $d{=}4096$, $T{=}4096$,
$B{=}1$, $M{=}32$ and $p{=}8$ the two totals are equal at $S{=}1$, and SOLO sends $0.55$ of 1F1B
at $S{=}10$.

\paragraph{Communication and link speed.}
Across the same $p-1$ links between stages, SOLO sends activations once and 1F1B sends them twice,
so SOLO sends half as much for any $p$, and a quarter to a ninth of VPP at equal precision, which
places $v$ virtual stages per GPU and multiplies traffic by $(vp-1)/(p-1)$ (a ninth to an
eighteenth against the PyTorch implementation, which sends fp32). This comparison applies to
pipeline parallelism only. Data-parallel backpropagation exchanges gradients sized by the parameter
count, $64$\,GiB per step at $p{=}8$ here, independent of $M$. Table~\ref{tab:link} measures the
effect of a slower link in configuration (B). The throughput of backpropagation falls as the link
slows, while SOLO changes by less than $1.2\%$ down to $1$\,Gb/s, where the ratio reaches $2.10$,
the ratio of the bytes sent. Beyond a node, the removed dependency matters more than the halved byte count: a SOLO stage
never waits for a gradient, so its throughput is flat where 1F1B's has halved. Cross-node pipelines, the regime this points to, are not run here.

\begin{table}[t]
\centering
\caption{Throughput for different link speeds, configuration (B): $L{=}24$, $H{=}2$, $M{=}8$,
$p{=}2$, each method using the fastest of the layer splits $11{:}13$, $12{:}12$ and $13{:}11$.
This configuration differs from Table~\ref{tab:pipe-bp} and the two should not be compared
directly. The rows from $10$\,Gb/s down shape the loopback link of a single node with \texttt{tc tbf}. At the two slowest settings the measured transfer rate is $88$ to $92\%$
of the nominal rate.}
\label{tab:link}
\small
\begin{tabular}{lrrrr}
\toprule
link & nominal MB/s & 1F1B & SOLO & SOLO / 1F1B \\
\midrule
NVLink    & n/a  & 57{,}685 & 59{,}606 & 1.033 \\
socket    & n/a  & 55{,}679 & 59{,}363 & 1.066 \\
10\,Gb/s  & 1250 & 52{,}667 & 59{,}318 & 1.126 \\
5\,Gb/s   & 625  & 48{,}401 & 59{,}328 & 1.226 \\
2\,Gb/s   & 250  & 38{,}989 & 59{,}213 & 1.519 \\
1\,Gb/s   & 125  & 28{,}084 & 58{,}901 & \textbf{2.097} \\
0.5\,Gb/s & 62   & 14{,}568 & 30{,}351 & 2.083 \\
\bottomrule
\end{tabular}
\end{table}

\begin{figure}[t]
\centering
\includegraphics[width=\linewidth]{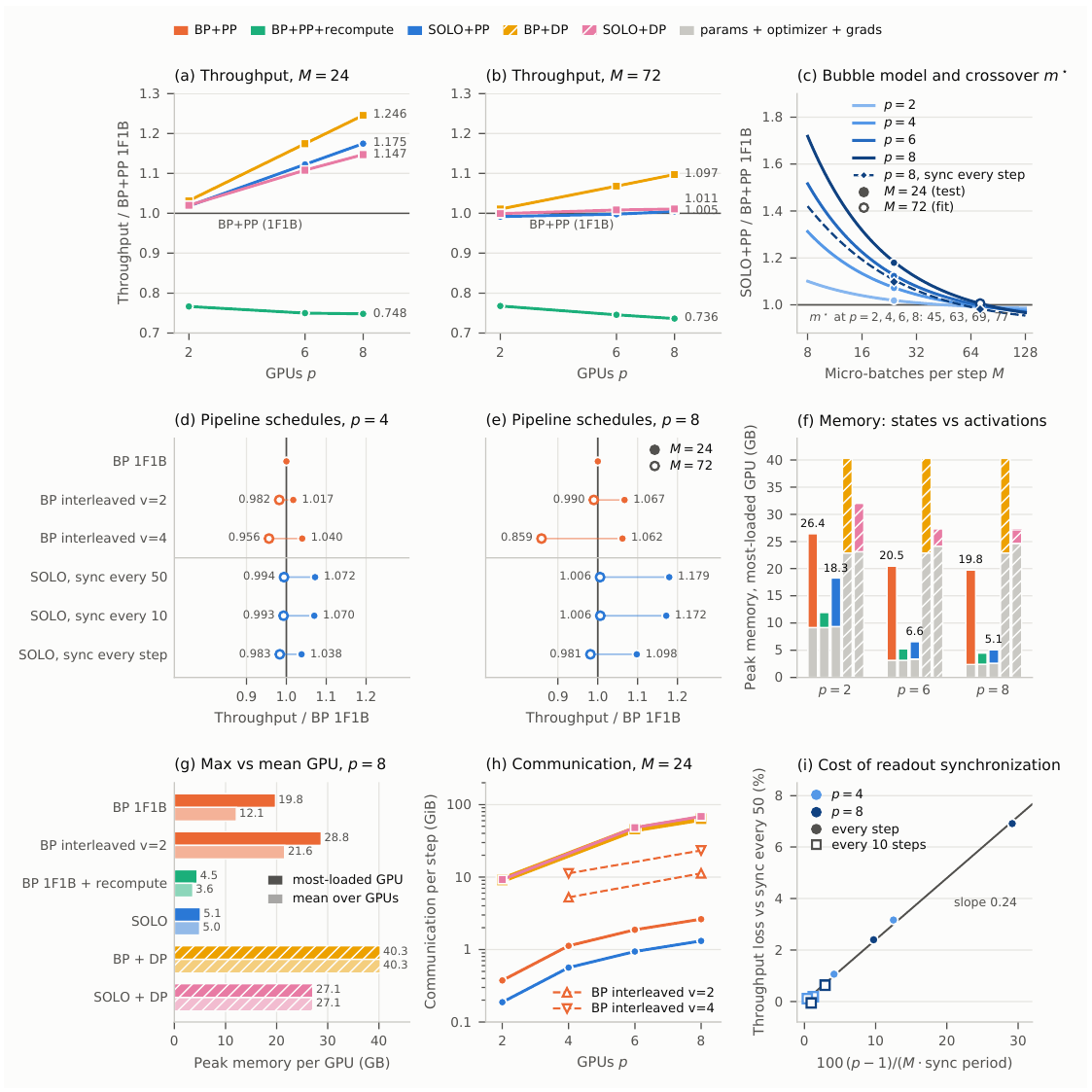}
\caption{Same-split comparison, configuration (A). (a, b) Throughput relative to 1F1B against the
number of stages at $M{=}24$ and $M{=}72$. (c) The bubble model of \eqref{eq:win} against the
number of micro-batches, with $\rho$ fitted at $M{=}72$ and the $M{=}24$ points held out; the
dashed line adds the synchronization cost at $S{=}1$. (d, e) Throughput of each schedule at $p{=}4$
and $p{=}8$. (f) Peak memory on the most loaded device, split into resident state and activations.
(g) Peak memory at $p{=}8$, most loaded device against the mean over devices. (h) Traffic per step.
(i) Throughput lost to readout synchronization against $(p-1)/(M S)$, with the fitted slope.}
\label{fig:pipe-bp}
\end{figure}

\paragraph{What the pipeline saves and what gradient isolation saves.}
Under the same split the parameter and optimizer memory of the two methods is the same, up to the
auxiliary heads, so the pipeline accounts for that part of the saving reported in
Section~\ref{sec:resources} and gradient isolation accounts for the activation part. Within
pipeline parallelism, backpropagation must give up activation memory or utilization, through
recomputation, through the number of micro-batches in flight, or through an uneven split, and SOLO
faces no such choice. Against data parallelism the picture is different. On this node
data-parallel backpropagation is the fastest configuration at every $p$, at $40.3$\,GB per device and
$64$\,GiB of gradient traffic per step, and SOLO under data parallelism runs at $1/\rho$ of its
throughput, with the same activation saving as under a pipeline.

\paragraph{Limitations.}
All measurements are on one node with NVLink and on synthetic tokens, and the model is the one of
configuration (A) rather than the models of Section~\ref{sec:pretraining}. Traffic is computed from
tensor shapes rather than measured on the wire. We did not measure zero-bubble schedules \citep{qi2024zerobubble}, which
split each backward pass into its input-gradient and weight-gradient halves and fill the
bubble with the latter; their ZB-H1 variant keeps the activation memory of 1F1B at about a
third of its bubble, and ZB-H2 removes the bubble at higher memory. On the bubble model with
the fitted $\rho$, ZB-H1 would be about level with SOLO at $M{=}24$ and ahead of it at
$M{=}72$. The
VPP baseline uses the PyTorch implementation, which sends fp32 and reaches $0.80$ to $0.95$ of
its own bubble model, so its throughput is a lower bound on what interleaving can achieve. The
benefit depends on the shape of the model through \eqref{eq:rho}, and
Table~\ref{tab:pipe-shapes} gives $M^{\star}$ for common shapes. A shallow model split into many
stages is the hardest case: at $L{=}24$, $d{=}2048$, $T{=}2048$, $V{=}32$k and $H{=}1$, eight
modules give $\rho=1.49$ and $M^{\star}=14$, against $\rho=1.09$ and $M^{\star}=70$ for the
$96$-block model measured here. Finally, this appendix reports systems costs only. The perplexity
gap of Section~\ref{sec:pretraining} is not included in any of these ratios.

\begin{table}[t]
\centering
\caption{Crossover $M^{\star}$ from \eqref{eq:win} for common model shapes. SOLO is faster than
1F1B when the number of micro-batches per step is below $M^{\star}$. The value does not depend on
the number of stages. Computed, not measured.}
\label{tab:pipe-shapes}
\small
\begin{tabular}{lrrrrrr}
\toprule
parameters & $L$ & $d$ & $T$ & $V$ & $M^{\star}$, $H{=}1$ & $M^{\star}$, $H{=}2$ \\
\midrule
7B  & 32 & 4096 & 4096 & 32{,}000  & 23.7 & 13.7 \\
8B  & 32 & 4096 & 8192 & 128{,}256 & 14.7 & 10.3 \\
70B & 80 & 8192 & 4096 & 32{,}000  & 66.9 & 36.5 \\
70B & 80 & 8192 & 8192 & 128{,}256 & 46.5 & 29.5 \\
\bottomrule
\end{tabular}
\end{table}

\end{document}